\newtheorem{definition}{Definition}[section]
\newtheorem{theorem}[definition]{Theorem}
\newtheorem{proposition}[definition]{Proposition}
\newtheorem{lemma}[definition]{Lemma}
\newtheorem{example}[definition]{Example}
\newtheorem{remark}[definition]{Remark}
\newtheorem{notation}[definition]{Notation}
\newtheorem{assumption}[definition]{Assumption}
\def\N{{\mathbb N}}
\def\R{{\mathbb R}}
\DeclareMathOperator*{\argmax}{argmax}
\DeclareMathOperator*{\conv}{conv}
\DeclareMathOperator*{\linear}{lin}
\newcommand*{\tran}{^{\mkern-1.5mu\mathsf{T}}}
\newcommand{\commentout}[1]{}
\newcommand{\abs}[1]{\mathop{\left\lvert #1 \right\rvert}} 
\newcommand{\args}[1]{\mathop{\left( #1 \right)}} 
\newcommand{\inner}[1]{\mathop{\left\langle #1 \right\rangle}}
\newcommand{\cbrace}[1]{\mathop{\left\{ #1 \right\}}}
\newcommand{\argsS}[2]{\mathop{\left( #1 \right)#2}} 
\newcommand{\normS}[2]{\mathop{\left\lVert #1 \right\rVert#2}}
\DeclareMathOperator{\id}{id}                    				% identity function
\newcommand{\T}{^{\mkern1.5mu\mathsf{T}}}     % transpose 
\renewcommand{\S}[1]{{\mathcal{#1}}}           	% arbitrary sets
\renewenvironment{cases}{%
\left\{\begin{array}{c@{\quad : \quad}l}}%
{%
\end{array}\right.}
\newcommand{\atab}[1]{\hspace*{#1em}}
\newcounter{algorithm_counter}
\newenvironment{algorithm}[1]{
\refstepcounter{algorithm_counter}
\setlength{\parindent}{0\parindent}
\vspace{2ex}
\begin{minipage}{\textwidth}
\rule{\textwidth}{\arrayrulewidth}\\
\begin{footnotesize}
\textbf{Algorithm:} #1 \\
%{\bf Algorithm \arabic{algorithm_counter}:} #1 \\
\rule[+1.5ex]{\textwidth}{\arrayrulewidth}

\vspace{-1.5ex}

}%
{
\\[-1.5ex]
\rule{\textwidth}{\arrayrulewidth}
%\end{sf}
\end{footnotesize}
\end{minipage}
\setlength{\parindent}{\parindent}
}
\rule{\textwidth}{\arrayrulewidth}
\begin{document}

\title{Warped-Linear Models for Time Series Classification}

\author{Brijnesh J.~Jain\\
 Technische Universit\"at Berlin, Germany\\
 e-mail: brijnesh.jain@gmail.com}
\date{}
\maketitle

\begin{abstract}
This article proposes and studies warped-linear models for time series classification. The proposed models are time-warp invariant analogues of linear models. Their construction is in line with time series averaging and extensions of k-means and learning vector quantization to dynamic time warping (DTW) spaces. The main theoretical result is that warped-linear models correspond to polyhedral classifiers in Euclidean spaces. This result simplifies the analysis of time-warp invariant models by reducing to max-linear functions. We exploit this relationship and derive solutions to the label-dependency problem and the problem of learning warped-linear models. Empirical results on time series classification suggest that warped-linear functions better trade solution quality against computation time than nearest-neighbor and prototype-based methods. 
\end{abstract}

\newpage

\tableofcontents
\clearpage

\section{Introduction}

Linear models are a mainstay of statistical pattern recognition. They make strong assumptions and yield stable but possibly inaccurate predictions \cite{Hastie2001}. Due to their simplicity and efficiency, they often serve as an initial trial classifier. In addition, linear models form the basis for techniques such as the perceptron, logistic regression, support vector machines, neural networks, and boosting \cite{Lebanon2005}. 

Linear models implicitly assume the geometry of Euclidean spaces. In non-Euclidean spaces, the theoretical insights on linear models and their implications break down. As a consequence, a mainstay of statistical pattern recognition is not available for non-Euclidean data. A powerful workaround to bridge this gap embeds distance spaces into linear spaces in either implicit or explicit manner.  Apart from feature extraction methods, common examples of such embeddings are kernel methods \cite{Schoelkopf2002} and dissimilarity representations \cite{Pekalska2005}. 

Such embeddings generally fail to preserve proximity relations and do not contribute much towards a better understanding of the nature of the original distance space, which is helpful for constructing more sophisticated classifiers. This holds, in particular, for time series endowed with the dynamic time warping (DTW) distance \cite{Sakoe1978}. Time series classification finds applications in diverse domains such as speech recognition, medical signal analysis, and recognition of gestures \cite{Fu2011,Geurts2001}. Notable, the prime approach in time series classification is the simple kNN method in conjunction with the DTW distance \cite{Bagnall2017}. In contrast, linear models for time series data do not play a significant role. The Euclidean geometry fails to filter out variations in temporal dynamics, which often leads to models that poorly fit the data.

\medskip 

Departing from the work on elastic classifiers \cite{Jain2015}, this article studies time-warp-invariant analogues of linear models. The contributions are as follows:

\medskip 

\noindent
\emph{Warped-linear functions.}\ We propose warped-linear functions that comprise warped-product and elastic-product functions as two techniques to enhance linear models with time-warp invariance. Warped-product functions replace the inner product between feature vectors by a warped product between time series. Warped products can be regarded as a similarity measure dual to the DTW distance. Elastic-product functions as proposed in \cite{Jain2015} replace the inner product by the product between a weight matrix and an input time series along a warping path. Here, we propose and analyze a slightly modified but substantially more flexible variant of elastic-product functions. Construction of warped-linear functions is in line with time series averaging \cite{Hautamaki2008,Petitjean2011,Schultz2017} and time-warp invariant extensions of k-means  \cite{Petitjean2016,Rabiner1979}, self-organizing maps \cite{Kohonen1998}, and learning vector quantization \cite{Jain2018,Somervuo1999}. 

\medskip 

\noindent
\emph{Equivalence to polyhedral classifiers.}\ 
Theorem \ref{theorem:warped=max-linear} states that warped-linear classifiers are equivalent to polyhedral classifiers under mild assumptions. Polyhedral classifiers are piecewise linear classifiers whose negative class region forms a convex polyhedron (see Figure \ref{fig:ex_separable}). This result is useful, because it simplifies the analysis of warped-linear functions by studying max-linear functions as handled in the next two contributions.  

\medskip 

\noindent
\emph{Label dependency.}\ 
We show that warped-linear classifiers are label-dependent and present a simple solution for this problem. Label dependency means that separability of two sets depends on how these sets are labeled as positive and negative. As an example, consider the second and third plot of Figure \ref{fig:ex_separable}. Both plots label the convex set $\S{U}$ as negative and the non-convex set $\S{V}$ as positive. In both cases we can find a polyhedron that contains the negative set $\S{U}$ and is disjoint to the positive set $\S{V}$. Hence both sets can be separated by warped-linear classifiers due to their equivalence to polyhedral classifiers. 
Now suppose that we would flip the labels such that $\S{U}$ is the positive and $\S{V}$ is the negative set. Then it is impossible to find a polyhedron that contains the negative set $\S{V}$ and is disjoint to the positive set $\S{U}$. Consequently, both sets can not be separated by a warped-linear classifier. A solution to this problem is using two instead of one discriminant function, one for each class.

\begin{figure}[t]
\centering
 \includegraphics[width=0.9\textwidth]{./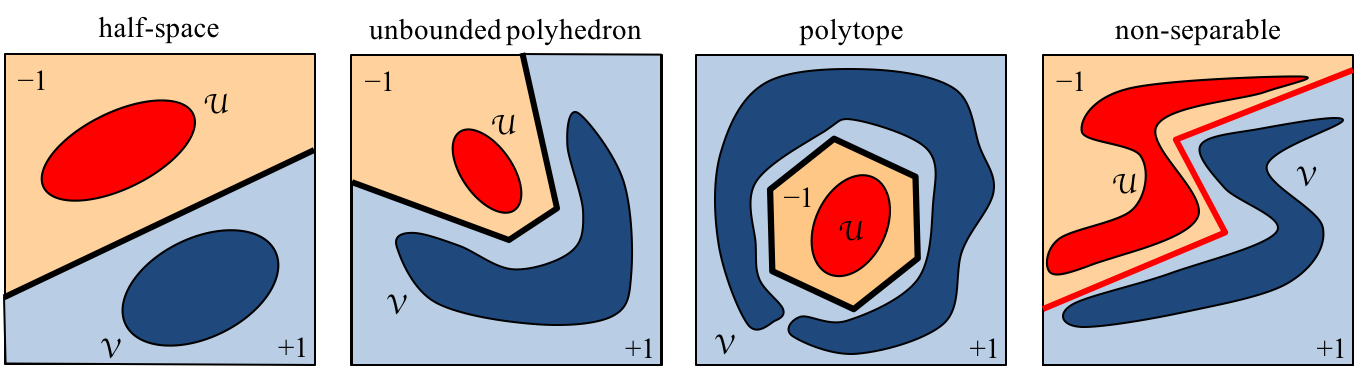}
\caption{The first three plots show two sets $\S{U}$ (red) and $\S{V}$ (blue) that are  separable by a polyhedral classifier. The first plot shows that polyhedral classifiers generalize linear classifiers. The second and third plot show an unbounded and bounded polyhedron, resp., as negative class region. The fourth plot shows two sets that can not be separated by a polyhedral classifier. }
\label{fig:ex_separable} 
\end{figure}

\medskip 

\noindent
\emph{Subgradient Method.}\ 
The regularized empirical risk over warped-linear functions is non-differentiable, because warped-linear functions are non-differentiable. By resorting to max-linear functions, we present a stochastic subgradient method for minimizing the regularized empirical risk under the assumption of a convex loss function.

\medskip 

\noindent
\emph{Experiments.}\ 
Empirical results in time series classification suggest that warped-linear classifiers complement nearest-neighbor and prototype-based methods in DTW spaces with respect to solution quality. In addition, we found that elastic-product classifiers were most efficient by at least one order of magnitude.

\medskip

The rest of this article is structured as follows: The remainder of this section discussed related work. Section 2 introduces warped-linear classifiers. Section 3 analyzes warped-linear classifiers via their relationship to max-linear classifiers and formulates a stochastic subgradient method for learning. Section 4 present experiments. Finally, Section 5 concludes with a summary of the main results and an outlook on further research. All proofs are delegated to the appendix.  

\subsection*{Related Work}

We discuss work related to elastic-product and warped-product functions as well as polyhedral classifiers. 

\paragraph*{Elastic-Product Functions}
The present work is based on elastic methods proposed by \cite{Jain2015}. Elastic methods combine dynamic time warping and gradient-based learning to extend standard learning methods such as linear classifiers, artificial neural networks, and k-means to warped time series. As main contribution, the work on elastic methods presents a unifying theoretical framework for adaptive methods in DTW spaces. As a proof of concept of this generic framework, elastic-product classifiers have been tested on two-category problems, only. 
The main contributions compared to \cite{Jain2015} are as follows:
\begin{enumerate}
\itemsep0em
\item Relationship to polyhedral classifiers.
\item Label dependency.
\item Subgradient methods.
\item Experiments on multi-category problems. 
\end{enumerate}
To prove Theorem \ref{theorem:warped=max-linear} for establishing the equivalence to polyhedral classifiers, we modify the original elastic-product functions proposed in \cite{Jain2015} in two ways: (i) we replace the bias by an elastic bias and (ii) we allow restrictions to subsets of warping paths. 

Learning the original elastic-product functions has been framed within the more general setting of elastic methods and amounts in minimizing the empirical risk by stochastic generalized gradient methods in the sense of Norkin \cite{Norkin1986}. Here, we render the 
stochastic generalized gradient method more precisely as a stochastic subgradient method.

\paragraph*{Warped-Product Functions}
The idea of using warped-product functions as a substitute of the inner product is in line with similar approaches that replace the Euclidean distance by the DTW distance in order to extend adaptive learning methods from Euclidean spaces to DTW spaces. Examples include time series averaging \cite{Abdulla2003,Cuturi2017,Hautamaki2008,Kruskal1983,Rabiner1979,Rabiner1980,Schultz2017,Petitjean2011}, k-means clustering \cite{Hautamaki2008,Niennattrakul2007a,Niennattrakul2007b,Petitjean2016,Rabiner1979,Rabiner1980,Soheily-Khah2016,Wilpon1985}, self-organizing maps \cite{Kohonen1998,Somervuo1999}, and learning vector quantization \cite{Somervuo1999,Jain2018}. Warped-product function have been mentioned in \cite{Jain2015} in a more general setting but rejected as less flexible than elastic-product functions. Theorem \ref{theorem:warped=max-linear} partly refutes this claim.

\paragraph*{Polyhedral Classifiers}
Theorem \ref{theorem:warped=max-linear} shows that enhancing linear models by time-warp invariances results in polyhedral classifiers under mild assumptions. Polyhedral classifiers and polyhedral separability have been studied for three decades \cite{Astorino2002,Dundar2008,Kantchelian2014,Manwani2010,Megiddo1988,Orsenigo2007,Takacs2009,Yujian2011}. None of the proposed methods considered a formulation inspired by time-warp invariances. To understand the effect of time-warp invariance, we compare warped linear functions with a standard polyhedral classifier closely related to the approach proposed by \cite{Manwani2010}.

\section{Warped-Linear Functions}

Warped-linear functions extend linear functions to time series spaces using the concept of dynamic time warping. This section introduces two approaches: warped-product and elastic-product functions. 

\subsection{Dynamic Time Warping}

A time series $x$ of length $l(x) = n$ is a sequence $x = (x_1, \ldots, x_n)$ consisting of elements $x_i \in \R$ for every time point $i \in [n] = \cbrace{1, \ldots, n}$. We use $\R^n$ to denote the set of time series of length $n$. Then 
\[
\S{T}_n = \bigcup_{d = 1}^n \R^d
\]
is the set of all time series of bounded length $n$ and $\S{T}$ is the set of time series of finite length. 

Different time series representing the same concept can vary in length and speed. A common technique to cope with these variations is dynamic time warping (DTW). Dynamic time warping aligns (warps) time series by locally compressing and expanding their time segments. Such alignments are described by warping paths. 

To define warping paths, we consider a ($m \times n$)-lattice $\S{L}_{m,n}$ consisting of $m \cdot n$ points at the intersections of $m$ horizontal and $n$ vertical lines. A \emph{warping path} in lattice $\S{L}_{m,n}$ is a sequence $p = (p_1 , \dots, p_L)$ of $L$ points $p_l = (i_l,j_l) \in \S{L}_{m,n}$ such that
\begin{enumerate}
\item $p_1 = (1,1)$ and $p_L = (m,n)$ \hfill(boundary conditions)
\item $p_{l+1} - p_{l} \in \cbrace{(1,0), (0,1), (1,1)}$ for all $l \in [L-1]$ \hfill (step condition)
\end{enumerate}
By $\S{P}_{m,n}$ we denote the set of all warping paths in $\S{L}_{m,n}$. A warping path departs at the upper left corner $(1,1)$ and ends at the lower right corner $(m,n)$ of the lattice. Only west $(1, 0)$, south $(0, 1)$, and southwest $(1, 1)$ steps are allowed to move from a given point $p_l$ to the next point $p_{l+1}$ for all $1 \leq l < L$. A point $(i,j)$ of warping path $p$ means that time point $i$ of the first time series is aligned to time point $j$ of the second time series. We write $(i,j) \in p$ to denote that $(i,j)$ is a point in path $p$.  

\subsection{Warped-Product Functions}

\begin{figure}[t]
\centering
 \includegraphics[width=0.8\textwidth]{./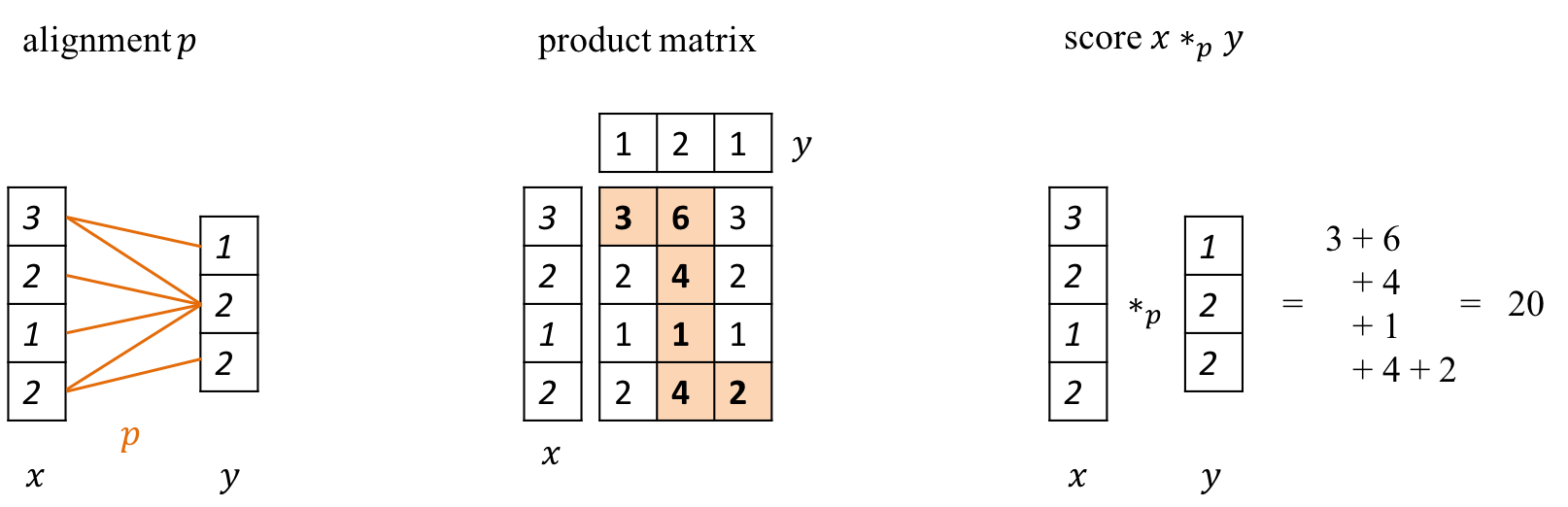}
\caption{Example of a score $x \ast_p y$ of two time series $x$ and $y$ defined by a warping path $p$. Since $p$ is an optimal warping path, the score $x \ast_p y$ coincides with the warped-product $x \ast y$. \emph{Left:} Two time series $x$ and $y$ shown as columns. Orange lines connecting the elements of $x$ and $y$ indicate the warping path $p$. \emph{Middle:} Local product matrix showing all possible products $x_i\cdot x_j$. The product along warping path $p$ are highlighted. \emph{Right:} Score $x \ast_p y = 20$ as sum of highlighted products along warping path $p$.}
\label{fig:ex_wp} 
\end{figure}

The first of two approaches to extend linear functions to time series spaces are warped-product functions. The basic idea is to replace the inner product between vectors by a warped product between time series. Warped products can be regarded as a similarity measure dual to the DTW distance \cite{Sakoe1978}.

\medskip

Suppose that $x, y \in \S{T}$ are two time series of length $l(x) = m$ and $l(y) = n$, respectively. Every warping path $p \in \S{P}_{m,n}$ defines a score 
\[
x \ast_p y = \sum_{(i,j) \in p} x_i y_j
\]
of aligning $x$ and $y$ along path $p$. The \emph{warped-product} is a function $\ast: \S{T} \times \S{T} \rightarrow \R$ defined by
\[
x \ast y = \max \cbrace{x \ast_p y \,:\, p \in \S{P}_{m,n}}.
\]
An \emph{optimal warping path} for $x \ast y$ is any path $p \in \S{P}_{m,n}$ such that $x \ast y = x \ast_p y$. Figure \ref{fig:ex_wp} illustrates the definitions of a score $x \ast_p y$ and warped-product $x \ast y$. 

A \emph{warped-product function} is a function of the form
\[
f: \S{T} \rightarrow \R, \quad x \mapsto (b, w) \ast (1, x).
\]
where $w \in \S{T}$ is the \emph{weight sequence} and $b \in \R$ is the \emph{elastic bias} of $f$. For convenience, we will use a more compact notation for warped-product functions.
\begin{notation}
By $\S{X} = \cbrace{1} \times \S{T}$ we denote the \emph{augmented input space}. Then a warped-product function $f$ can be written as 
\[
f: \S{X} \rightarrow \R, \quad x \mapsto w \ast x,
\]
where $w$ is the augmented weight sequence that includes the bias as first element. 
\end{notation}

Suppose that $f(x) = w \ast x$ is warped-product function. The length $e = l(w)$ of the (augmented) weight sequence $w$ is a hyper-parameter, called \emph{elasticity} of $f$ henceforth.

\subsection{Elastic-Product Functions}

The second of two approaches to extend linear functions to time series spaces are warped-product functions. 

\medskip

Elastic products warp time series into a matrix along a warping path. To define the score of such a warping we proceed in two steps. The first steps assumes time series of fixed length $d$. The second step generalizes to time series of bounded length $d$.

Let $W \in  \R^{d \times e}$ be a matrix. Suppose that $x \in \R^d$ is a time series of length $d$. Then every warping path $p \in \S{P}_{d, e}$ defines the score 
\[
W \otimes_p x = \sum_{(i,j) \in p} w_{ij} x_i.
\]
To extend the score $W \otimes_p x$ to time series $x \in \S{T}_d$ of bounded length $l(x) \leq d$, we write $W_{[1:l(x)]}$ to denote the sub-matrix consisting of the first $l(x)$ rows of matrix $W$. Then every warping path $p \in \S{P}_{l(x), e}$ defines the score 
\[
W \otimes_p x = W_{[1:l(x)]} \otimes_p x.
\]
Figure \ref{fig:ex_ep} illustrates the score $W \otimes_p x$ defined by a warping path $p$. An \emph{elastic product} is a function $\otimes: \R^{d \times e} \times \S{T}_d \rightarrow \R$ defined by
\[
W \otimes x = \max \cbrace{W \otimes_p x \,:\, p \in \S{P}_{l(x),e}}.
\] 
An \emph{optimal warping path} for $W \otimes x$ is any path $p \in \S{P}_{m,n}$ such that $W \otimes x = W \otimes_p x$. An \emph{elastic-product function} of elasticity $e$ is a function of the form 
\[
f: \S{T}_d \rightarrow \R, \quad x \mapsto \begin{pmatrix}
{b}\T\\
W
\end{pmatrix} 
\otimes 
\begin{pmatrix}
1\\
x
\end{pmatrix}, 
\]
where $W \in \R^{d\times e}$ is the \emph{weight matrix} and the vector $b \in \R^e$ is the \emph{elastic bias} of $f$. We write  elastic-product functions in augmented form but reduce the dimension to keep the notation simple. 
\begin{notation}
Let $d > 1$.By $\S{X} = \cbrace{1} \times \,\S{T}_{d-1}$ we denote the augmented input space. Then an elastic-product function can be written as
\[
f: \S{X} \rightarrow \R, \quad x \mapsto W \otimes x,
\]
where $W \in \R^{d\times e}$ is the augmented weight matrix containing the elastic bias in its first row. 
\end{notation}

\begin{figure}[t]
\centering
 \includegraphics[width=0.45\textwidth]{./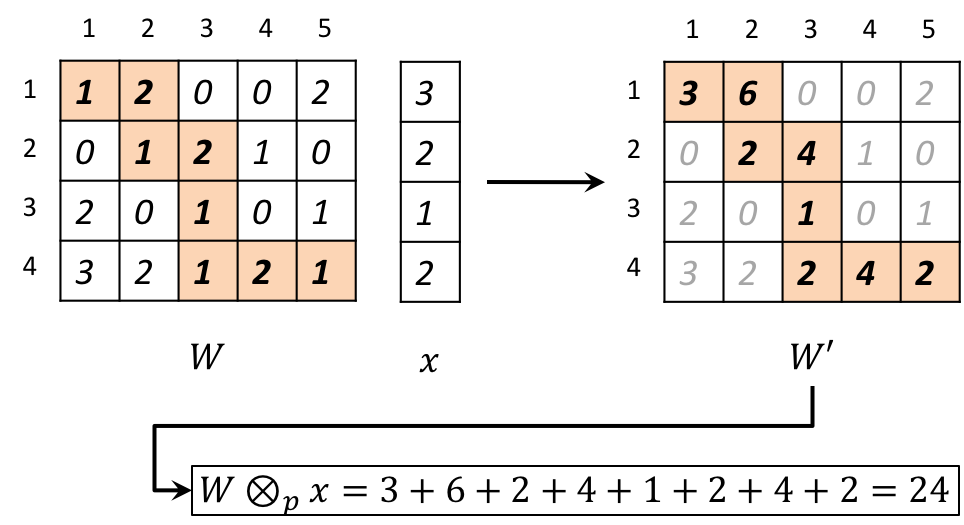}
 \hfill
 \includegraphics[width=0.45\textwidth]{./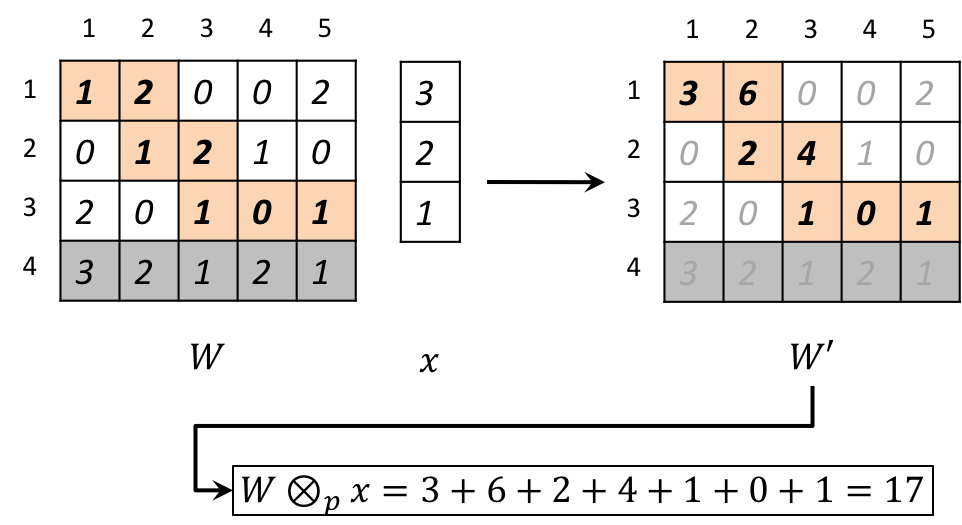}
\caption{Examples of scores $W \otimes_p x$ defined by a warping path $p$. Both examples use the same weight matrix $W \in \R^{4 \times 5}$ with length (number of rows) $4$ and elasticity $5$. The matrix $W$ highlights the respective warping paths $p$ by orange cells and bold-faced numbers. \emph{Left:} $W \otimes_p x$ for a time series $x$ of the same length $4$ as $W$. Since $x$ has length $4$, the warping path $p$ is from the set $\S{P}_{4,5}$. The matrix $W'$ highlights the products $w_{ij} x_i$ for all $(i,j) \in p$. Summing the highlighted products gives the score $W \otimes_p x = 27$. \emph{Right:} $W \otimes_p x$ for a time series $x$ of length $3 < 4$. In this case, the warping path $p$ is from $\S{P}_{3,5}$. Thus, the fourth row in $W$ is ignored such that this case reduces to the case of the left figure when the length of the matrix and the time series coincide.}
\label{fig:ex_ep} 
\end{figure}

\subsection{Warping Constraints}

This section completes the definition of warped-linear functions by imposing warping constraints. Warping constraints restrict the set of admissible warping paths to some nonempty subset. Such restrictions have been originally introduced to improve performance of applications based on the DTW distance. Popular examples are the Sakoe-Chiba band \cite{Sakoe1978} and the Itakura parallelogram \cite{Itakura1975}. Here, we use warping constraints to achieve theoretical flexibility. 

\medskip

Suppose that $x$ and $y$ are time series of length $\ell(x) = m$ and $\ell(y) = n$. Let $\S{Q} \subseteq \S{P}_{m,n}$ is a subset. The 
warped product between $x$ and $y$ in (or constrained to) $\S{Q}$ is of the form
\[
x \ast y = \max \cbrace{x \ast_p y\,:\, p \in \S{Q}}
\]
Similarly, we say that 
\[
W \otimes x = \max \cbrace{W \otimes_p x \,:\, p \in \S{Q}_{l(x)}}.
\]
is the elastic product between $W$ and $x$ in (or constrained to) $\S{Q} = \bigcup_{l=1}^d \S{Q}_l$, where $\S{Q}_{l} \subseteq \S{P}_{l,e}$ is a subset for every $l \in [d]$. In the same manner, we define optimal warping paths in $\S{Q}$ and warped-linear functions in $\S{Q}$.

\section{Max-Linear Models for Time Series Classification}

The definition of warped-linear functions follows the traditional problem-solving approach of dynamic time warping and optimal sequence alignment. The traditional approach is well suited for translation into dynamic programming solutions but less suited for analytical purposes. In this section, we suggest max-linear functions as a more suitable representation for studying warped-linear functions. The analysis rests on the following assumption:

\bigskip

\hrule
\begin{assumption}\label{assumption:general}
We analyze warped-linear functions length-wise. This means, the following results hold for subspaces of time series of identical length. For this, we assume that $\S{X}$ is an augmented input space of the form $\S{X} = \cbrace{1} \times \R^{d-1} \subseteq \R^d$. 

\medskip

A warped-linear function is always constrained to some (not necessarily proper) subset of all possible warping paths given the input dimension and elasticity. Thus, warped-linear functions subsumes constrained as well as unconstrained warped-linear functions. We use the following notations:
\begin{enumerate}
\itemsep0em
\item $\S{W}_d$ is the set of all (constrained and unconstrained) warped-product functions on $\S{X}$ of finite elasticity
\item $\S{E}_d$ is the set of all (constrained and unconstrained) elastic-product functions on $\S{X}$ of finite elasticity.
\end{enumerate}
\end{assumption}

\begin{remark}
Restriction to augmented time series of fixed length $d$ serves a better theoretical understanding of warped-linear functions but imposes no practical limitations. In a practical setting, we admit time series of varying length. 
\end{remark}
\hrule

\subsection{Max-Linear Functions}

This section shows that warped-linear functions are pointwise maximizers of linear functions. This result simplifies the study of analytically complicated functions based on dynamic time warping to the study of a much simpler class of functions.

\medskip

A \emph{generalized linear function} is a function of the form 
\[
f: \R^d \rightarrow \R, \quad  x \mapsto w\T \phi(x) + b,
\]
where $\phi: \R^d \rightarrow \R^m$ is a linear transformation, $w \in \R^m$ is the weight vector, and $b \in \R$ is the bias. If $\phi = \id$ is the identity, we recover the definition of standard linear functions. Note that generalized linear functions can be equivalently expressed as standard linear functions. Here, the notion of generalized linear function serves to emphasize that the weight vector can be of different dimension than the input vector.  

\medskip

To be consistent with warped-linear functions, we represent generalized linear functions in augmented form but reduce the dimension to keep the notation simple. 
\begin{notation}
Let $\S{X}$ be the augmented input space of Assumption \ref{assumption:general}. We consider linear transformations $\phi: \R^d \rightarrow \R^m$ that satisfy $\phi(\S{X}) \subseteq \cbrace{1} \times \R^{m-1}$. Then a generalized linear function can be written in compact form as 
\[
f: \S{X} \rightarrow \R, \quad  x \mapsto w\T \phi(x),
\]
where $w \in \R^m$ is the augmented weight vector that contains the bias $b$ as its first element. 
\end{notation}

Suppose that $\S{P} \neq \emptyset$ is a finite index set. A \emph{max-linear function} is a function defined by 
\[
f:\S{X} \rightarrow \R, \quad x \mapsto \max \cbrace{f_p(x) \,:\, p \in \S{P}},
\]
where the components $f_p(x) = w_p{\T}\phi_p(x)$ are generalized linear functions. The \emph{active set} of a max-linear function $f$ at point $x \in \R^d$ is the subset defined by
\[
\S{A}_f(x) = \cbrace{f_p \,:\; f_p(x) = f(x), p \in \S{P}}.
\] 
The elements $f_p \in \S{A}_f(x)$ are the \emph{active components} of $f$ at $x$. By $\S{L}_d$ we denote the set of all max-linear functions on $\R^d$ with finite number of component functions.

\medskip

We present the main result of this contribution. It shows that warped-linear functions are max-linear. 

\begin{theorem}\label{theorem:warped=max-linear}
$\S{W}_d \subseteq \S{E}_d = \S{L}_d$.
\end{theorem}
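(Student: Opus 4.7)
The statement decomposes into three inclusions: $\S{W}_d \subseteq \S{E}_d$, $\S{E}_d \subseteq \S{L}_d$, and $\S{L}_d \subseteq \S{E}_d$. My plan is to dispatch each by a direct construction, with the bulk of the work going into the final one.

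For $\S{W}_d \subseteq \S{E}_d$: given a warped-product function $f(x) = w \ast x$ with augmented weight sequence $w$ of length $e$ and warping constraint $\S{Q}' \subseteq \S{P}_{e,d}$, I would define the matrix $W \in \R^{d \times e}$ by $W_{ij} = w_j$ (every row a copy of $w$) and take $\S{Q} \subseteq \S{P}_{d,e}$ to be the image of $\S{Q}'$ under the transposition bijection $(i,j) \in p \leftrightarrow (j,i) \in p'$, which is indeed a bijection because the step condition is symmetric under swapping coordinates. A one-line check then gives $W \otimes_p x = \sum_{(i,j) \in p} w_j x_i = w \ast_{p'} x$, so the maxima agree. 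For $\S{E}_d \subseteq \S{L}_d$: fixing $f(x) = W \otimes x$ with constraint $\S{Q}$, I would regroup the sum along any fixed path $p$ by row index to obtain $W \otimes_p x = \sum_{i=1}^d c^p_i x_i$ with $c^p_i = \sum_{j : (i,j) \in p} W_{ij}$. Each such $x \mapsto (c^p)\T x$ is a standard (and hence generalized) linear function with $\phi_p = \id$, so their pointwise maximum over the finite set $\S{Q}$ lies in $\S{L}_d$.

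For $\S{L}_d \subseteq \S{E}_d$, the part I expect to be the main obstacle: given a max-linear $f = \max_{k=1}^K f_k$, first collapse each component $f_k(x) = w_k\T \phi_k(x)$ to a standard linear function $f_k(x) = a_k\T x$, which is possible because each $\phi_k$ is linear. Then set the elasticity to $e = K + d - 1$ and design $K$ structurally distinct warping paths $p_1, \ldots, p_K \in \S{P}_{d,e}$ so that $W \otimes_{p_k} x = a_k\T x$ for a single shared matrix $W$. A workable choice is: $p_k$ stays on row $1$ for columns $1, \ldots, k$, then moves strictly diagonally through $(2, k+1), (3, k+2), \ldots, (d, k+d-1)$, and finally continues rightward along row $d$ to $(d, e)$. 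These paths all satisfy the boundary and step conditions but visit different cells on the intermediate rows (namely $(i, k+i-1)$ in row $i$) and have different-length row-$1$ and row-$d$ segments. I would solve for $W$ entry by entry: set $w_{i, k+i-1} = (a_k)_i$ for $2 \leq i \leq d-1$, exploit telescoping in $k$ to set $w_{1,k} = (a_k)_1 - (a_{k-1})_1$ (with $a_0 \equiv 0$) so that the row-$1$ contribution reproduces the $x_1 = 1$ bias term, and determine the row-$d$ entries analogously from the tail sums representing $(a_k)_d$. All other cells can be set to $0$. Constraining to $\S{Q} = \{p_1, \ldots, p_K\}$ then yields $W \otimes x = \max_k a_k\T x = f(x)$.

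The crux really is this last direction: an arbitrary finite collection of linear functions must be embedded into the sum-along-a-path structure, yet warping paths cannot be made pairwise disjoint because they share the endpoints $(1,1)$ and $(d,e)$. The way out is twofold --- the freedom to impose a warping constraint $\S{Q}$ lets me isolate exactly the designed paths, and the telescoping trick absorbs the unavoidable overlap on rows $1$ and $d$ into differences of matrix entries. Combining the three inclusions then yields $\S{W}_d \subseteq \S{E}_d = \S{L}_d$.
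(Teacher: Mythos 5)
Your proof is correct. For the crucial inclusion $\S{L}_d \subseteq \S{E}_d$ it is in substance the paper's own argument (Lemma \ref{lemma:A->W}): telescoped weights on the first row, reverse-telescoped weights on the last row, one dedicated cell per component in each intermediate row reached by diagonal steps, and a warping constraint $\S{Q}$ that isolates the $c$ designed paths. Your version is slightly more economical -- elasticity $e = c + d - 1$ with the paths listed explicitly, versus the paper's $e = 2(c-1)+d$ with interleaved zero columns and don't-care cells; the paper needs that padding because it defines $\S{Q}$ implicitly as the set of paths avoiding don't-care entries, a complication you sidestep by naming the paths outright. Where you genuinely diverge is in the decomposition of the theorem: the paper proves $\S{W}_d \subseteq \S{L}_d$ and $\S{E}_d \subseteq \S{L}_d$ (via the warping-matrix identity $w \ast_p x = w\T M_p x$ and the $p$-projection $w_p = (M_p \circ W)\,u_e$, the latter being exactly your row-wise regrouping) and then $\S{L}_d \subseteq \S{E}_d$, so that $\S{W}_d \subseteq \S{E}_d$ is obtained only as a corollary of the chain through $\S{L}_d$. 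Your direct embedding of warped-product into elastic-product functions -- replicate $w$ into every row of $W$ and transpose the warping paths, which is legitimate since the boundary and step conditions are symmetric under transposition -- does not appear in the paper and makes that containment transparent; it also exhibits concretely the total weight sharing across rows that the paper's experimental section later invokes informally to explain the behaviour of warped-product classifiers. Both routes are sound: the paper's buys a uniform reduction of everything to inner products and hence to $\S{L}_d$ first, yours buys a self-contained picture of how the three function classes nest.
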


For $d = 1$ we can construct a max-linear function without bias that can not be expressed as a warped-product function. Using a slightly modified version $\S{W}'_d$ of  the set $\S{W}_d$, we can also show equality  $\S{W}'_d = \S{L}_d$. For this, we augment the (already augmented) input space $\S{X}$ by a leading and trailing zero for warped-product functions.

\begin{proposition}\label{prop:W'_d = L_d}
Let $\S{W}'_d$ be the set of warped-product functions on $\S{X}' = \cbrace{0} \times \S{X} \times \cbrace {0}$ with finite elasticity. Then $\S{W}'_d = \S{L}_d$. 
\end{proposition}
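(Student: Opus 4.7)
The plan is to prove the two inclusions $\S{W}'_d \subseteq \S{L}_d$ and $\S{L}_d \subseteq \S{W}'_d$ separately. The first is immediate from the definitions; the second requires an explicit construction that exploits the zero-padding in $\S{X}' = \cbrace{0} \times \S{X} \times \cbrace{0}$ to realise an arbitrary max-linear function as a constrained warped product.

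For $\S{W}'_d \subseteq \S{L}_d$, I would note that any $f \in \S{W}'_d$ has the form $f(x) = \max_{q \in \S{Q}} \sum_{(i,j) \in q} w_i (0,x,0)_j$ for a finite weight sequence $w$ and a (possibly unconstrained) warping constraint $\S{Q}$. Since the padding $x \mapsto (0,x,0)$ is linear, each component indexed by $q$ is a generalized linear function of $x$, so $f$ is the pointwise maximum of finitely many generalized linear functions and hence lies in $\S{L}_d$.

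For the reverse inclusion $\S{L}_d \subseteq \S{W}'_d$, I would write an arbitrary $f \in \S{L}_d$ in the compact form $f(x) = \max_{p \in [P]} a_p\T x$ with $a_p \in \R^d$ and $x = (1, x_2, \ldots, x_d) \in \S{X}$, so that the bias is absorbed into $a_{p,1}$. Then I concatenate the vectors $a_1, \ldots, a_P$ into a single weight sequence $w$ of length $m = P(d+1) - 1$: block $p$ occupies positions $(p-1)(d+1)+1, \ldots, (p-1)(d+1)+d$ and carries the entries $w_{(p-1)(d+1)+j} = a_{p,j}$, separated from its neighbour by a single buffer slot at position $p(d+1)$ whose value can be set to zero. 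For each $p$ I specify a warping path $q_p \in \S{P}_{m, d+2}$ that first descends along column $j=1$ until reaching the start of block $p$, then advances diagonally through block $p$ so that $w_{(p-1)(d+1)+j}$ aligns with $x'_{j+1} = x_j$ for $j = 1, \ldots, d$, and finally descends along column $j = d+2$ to $(m, d+2)$. Because $x'_1 = x'_{d+2} = 0$, every weight aligned outside block $p$ contributes nothing, while the diagonal through block $p$ contributes exactly $a_p\T x = f_p(x)$. Taking $\S{Q} = \cbrace{q_1, \ldots, q_P}$ as the warping constraint then yields $w \ast (0,x,0) = \max_p f_p(x) = f(x)$, placing $f$ in $\S{W}'_d$.

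The main obstacle is the bookkeeping at the two boundary blocks: for $p = 1$ the initial descent is empty and the path must begin with a $(0,1)$ step from $(1,1)$ to $(1,2)$ rather than a $(1,1)$ step out of column $j=1$; for $p = P$ the index $p(d+1) = m+1$ falls outside $[m]$, so the final descent shrinks to a single $(0,1)$ step from $(m, d+1)$ to $(m, d+2)$. One must verify that every transition satisfies $p_{l+1} - p_l \in \cbrace{(1,0),(0,1),(1,1)}$, that the buffer slots always land on a padded zero along every $q_p$ (so that their values are immaterial), and that distinct blocks do not interfere along any single path. The essential reason the construction works for $\S{W}'_d$ but (by the preceding remark) not for $\S{W}_d$ is that the leading and trailing zeros in $x'$ act as universal absorbers, allowing the warping path to idle on the padding while using only the desired block of $w$ to produce the chosen component $f_p$.
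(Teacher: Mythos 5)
Your proof is correct and follows essentially the same route as the paper: the forward inclusion via linearity of the zero-padding map composed with the argument for $\S{W}_d \subseteq \S{L}_d$, and the reverse inclusion by concatenating the component weight vectors into one long weight sequence and constraining to one warping path per component that idles on the leading zero, traverses its block diagonally, and idles on the trailing zero. The only (harmless) deviation is your insertion of zero-valued buffer slots between blocks; the paper's construction uses $e = cd$ with the blocks abutting directly, which works because the handoff rows are always aligned with a padded zero on one side.
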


In the remainder of this section, we show how warped-linear functions are related to max-linear functions.

\subsubsection*{Warped-Product Functions}

To express warped-product functions as max-linear function, we introduce warping matrices. Let $w$ be a weight sequence of length $l(w) = e$, let $x \in \S{X}$ be a time series of length $l(x) = d$, and let $Q \subseteq \S{P}_{e, d}$ be a subset. The \emph{warping matrix} of a given warping path $p \in \S{Q}$ is a binary matrix $M_p \in \{0,1\}^{e \times d}$ with elements
\begin{equation*}
m_{ij}^p = \begin{cases} 
1 & (i,j) \in p \\ 
0 & \text{otherwise} 
\end{cases}.
\end{equation*}
A warping matrix is an equivalent representation of a warping path. Consider the linear transformation
\[
\phi_p(x) = M_p\, x.
\]
As a consequence of Theorem \ref{theorem:warped=max-linear}, a warped-product function $f(x) = w \ast x$ can be equivalently written as 
\[
f(x) = \max\cbrace{f_p(x)= w\T\phi_p(x) \,:\, p \in \S{Q}}, 
\]
where the components $f_p(x)$ are generalized linear functions indexed by warping paths $p$. Thus, a component function $f_p$ is active at $x$ if $p$ is an optimal warping path for $w \ast x$.

\subsubsection*{Elastic-Product Functions}

To express elastic-product functions as max-linear functions, we assume that $W \in \R^{d \times e}$ is a weight matrix, $x \in \S{X}$ is a time series, and $Q \subseteq \S{P}_{d,e}$ is a subset. Every warping path $p \in \S{Q}$ induces a map
\[
\phi_p: \S{X} \mapsto \R^{d \times e}, \quad x \mapsto X_p,
\]
where $X_p = \args{x_{ij}^p}$ is the \emph{$p$-matrix} of $x$ with elements
\[
x_{ij}^p = \begin{cases}
x_i & (i,j) \in p\\
0 & \text{otherwise}
\end{cases}.
\]
The $p$-matrix $X_p$ represents $x$ in the lattice $\S{L}_{d,e}$ along warping path $p$. Lemma \ref{lemma:x->Xp} shows that the map $\phi_p$ is  linear. Consider the Frobenius inner product defined by
\[
\inner{W, \phi_p(x)} = \inner{W, X_p} = \sum_{i=1}^d \sum_{j=1}^e w_{ij}\,x_{ij}^p.
\]
The Frobenius inner product is an inner product between matrices as though they are vectors and is therefore a generalized linear function. As a consequence of Theorem \ref{theorem:warped=max-linear}, an elastic-product function $f(x) = W \otimes x$ is of the form
\[
f(x) = \max \cbrace{f_p(x) = \inner{W, X_p} \,:\, p \in \S{Q}},
\] 
where the components $f_p(x)$ are generalized linear functions indexed by warping paths $p$. Thus, a component function $f_p$ is active at $x$ if $p$ is an optimal warping path for $W \otimes x$.  

\subsection{Discriminant Functions}

In this section, we use max-linear functions as discriminant functions for representing classifiers. We show that a single max-linear discriminant function for the two-category case is not justified. This results is in contrast to the traditional treatment of the two category case using linear discriminant functions. 

\medskip

Suppose that $\S{X} = \R^d$ is the input space and  $\S{Y} = \cbrace{1, \ldots, K}$ is a set consisting of $K$ class labels. Consider $K$ max-linear discriminant functions $f_1, \ldots, f_K:\S{X} \rightarrow \S{Y}$. Then the decision rule based on $K$ discriminant functions $f_k$ classifies points $x \in \S{X}$ to class labels $y(x) \in \S{Y}$ according to the rule\footnote{The $\argmax$-operation may return a subset $\S{Y}' \subseteq \S{Y}$. If $\abs{\S{Y}'} > 1$ we pick a random element from $\S{Y}'$.}  
\[
y(x) \in \argmax_{k \in \S{Y}} f_k(x).
\]
The two-category case is just a special instance of the multicategory case, but has traditionally received a separate treatment \cite{Duda2001}. For max-linear discriminant functions such a separate treatment can lead into a pitfall. To see this, we assume that $K = 2$. Then the decision rule reduces to 
\[
y(x) = \begin{cases}
1 & f_1(x) > f_2(x)\\
2 & \text{otherwise}
\end{cases}.
\]
Hence, we can use a single discriminant function $f = f_1 - f_2$ to obtain the equivalent decision rule that assigns $x$ to class $y(x) = 1$ if $f(x) > 0$ and to class $y(x) = 2$ otherwise. In contrast to standard linear classifiers, the difference of max-linear functions is generally not a max-linear function. Thus, using a single max-linear functions for the two-category case is not justified. To show this, consider the sets
\begin{align*}
\Delta(\S{L}_d) &= \cbrace{f_1 - f_2 \,:\ f_1,f_2 \in \S{L}_d}\\
\linear\args{\S{L}_d} &= \cbrace{\lambda_1 f_1 +  \lambda_2 f_2 \,:\ f_1,f_2 \in \S{L}_d,\: \lambda_1, \lambda_2 \in \R}.
\end{align*}
We call $\Delta(\S{L}_d)$ the \emph{difference hull} and $\linear\args{\S{L}_d}$ the \emph{linear hull} of $\S{L}_d$. We have the following relationships between the sets:
\begin{proposition}\label{prop:L C diff C lin}
$\S{L}_d \subsetneq \Delta(\S{L}_d) = \linear\args{\S{L}_d}$.
\end{proposition}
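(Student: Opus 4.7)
The proposition bundles together one strict inclusion and one equality, so I would organize the proof around four elementary claims: (i) $\S{L}_d\subseteq\Delta(\S{L}_d)$, (ii) this inclusion is strict, (iii) $\Delta(\S{L}_d)\subseteq\linear(\S{L}_d)$, and (iv) $\linear(\S{L}_d)\subseteq\Delta(\S{L}_d)$. Items (i) and (iii) are immediate once one notes that the zero function is max-linear (take a single constant component $f_p\equiv 0$), so any $f\in\S{L}_d$ can be written as $f-0\in\Delta(\S{L}_d)$, and a difference $f_1-f_2$ is the linear combination $1\cdot f_1+(-1)\cdot f_2\in\linear(\S{L}_d)$.

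The main work is in (iv). The plan is to first establish two closure properties of $\S{L}_d$: closure under nonnegative scaling and closure under pointwise addition. For $\lambda\geq 0$ and $f=\max_{p\in\S{P}} f_p \in \S{L}_d$, one has $\lambda f=\max_{p\in\S{P}} (\lambda f_p)$ and each $\lambda f_p$ is still a generalized linear function. For $f=\max_{p\in\S{P}} f_p$ and $g=\max_{q\in\S{Q}} g_q$ in $\S{L}_d$, the key identity
\[
f(x)+g(x)=\max_{(p,q)\in\S{P}\times\S{Q}}\bigl(f_p(x)+g_q(x)\bigr)
\]
expresses $f+g$ as a max of sums $f_p+g_q$, each of which is again a generalized linear function (sums of linear maps are linear, so the standard-linear form $w\T x+b$ persists). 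With both closure properties in hand, any combination $\lambda_1 f_1+\lambda_2 f_2$ can be split by writing $\lambda_i=\lambda_i^+-\lambda_i^-$ and rearranging as
\[
\lambda_1 f_1+\lambda_2 f_2=\bigl(\lambda_1^+ f_1+\lambda_2^+ f_2\bigr)-\bigl(\lambda_1^- f_1+\lambda_2^- f_2\bigr),
\]
where both bracketed terms lie in $\S{L}_d$ by closure. This yields $\linear(\S{L}_d)\subseteq\Delta(\S{L}_d)$ and completes the equality.

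For the strict inclusion in (ii), I would exploit the convexity of max-linear functions: any pointwise maximum of a finite family of affine functions is convex on $\R^d$. A single witness of non-convexity in $\Delta(\S{L}_d)$ therefore suffices. Take $h(x)=-\abs{x_1}$; since $\abs{x_1}=\max\{x_1,-x_1\}\in\S{L}_d$ and the zero function is in $\S{L}_d$, we get $h=0-\abs{x_1}\in\Delta(\S{L}_d)$. But $h$ is concave and non-affine, hence not convex, and therefore $h\notin\S{L}_d$.

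The only mildly delicate step is the addition-closure argument, where one must be careful that a sum of two generalized linear functions with possibly different linear transformations $\phi_p,\psi_q$ is still a generalized linear function. The paper's convention permits identifying any standard affine map with a generalized linear function via $\phi=\id$, so this is a bookkeeping point rather than a genuine obstacle; no structural result beyond the definitions given in the excerpt is required.
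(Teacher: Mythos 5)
Your proof is correct and follows essentially the same route as the paper's: the zero function gives $\S{L}_d \subseteq \Delta(\S{L}_d)$, convexity of max-linear functions certifies strictness (the paper takes a generic non-linear $f \in \S{L}_d$ where you use the concrete witness $-\abs{x_1}$), and the equality $\Delta(\S{L}_d) = \linear\args{\S{L}_d}$ rests on closure of $\S{L}_d$ under nonnegative scaling and addition, which the paper handles by a four-way sign case analysis equivalent to your $\lambda_i = \lambda_i^+ - \lambda_i^-$ splitting. Your explicit verification of the addition-closure identity $\max_p f_p + \max_q g_q = \max_{(p,q)}\bigl(f_p + g_q\bigr)$ is a step the paper asserts without proof, so if anything your write-up is slightly more complete.
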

The implications of Prop.~\ref{prop:L C diff C lin} are twofold: (i) a single max-linear discriminant function for the two-category case is not justified, because $\S{L}_d \subsetneq \Delta(\S{L}_d)$; and (ii) the difference closure $\Delta(\S{L}_d) = \linear(\S{L}_d)$ is more expressive than the set $\S{L}_d$. Consequently, the decision rule in the two-category case should be based on two instead of a single max-linear discriminant functions. In other words, the two-category case should be treated in the same way as the multicategory case.

\medskip

From a geometric point of view, Prop.~\ref{prop:L C diff C lin} has the following implication: Every discriminant function $f \in \linear\args{\S{L}_d}$ defines a \emph{decision surface} 
\[
\S{S}(f) = \cbrace{x \in \S{X} \,:\, f(x) = 0}
\]
that partitions the input space $\S{X}$ into two regions 
\[
\S{R}_f^+ = \cbrace{x \in \S{X} \,:\, f(x) > 0} \quad \text{ and } \quad \S{R}_f^- = \cbrace{x \in \S{X} \,:\, f(x) \leq 0}.
\]
The open set $\S{R}_f^+$ is the region for the positive class and the closed set $\S{R}_f^-$ is the region for the negative class. Then from Prop.~\ref{prop:L C diff C lin} follows that the difference hull $\Delta(\S{L}_d)$ can implement more decision surfaces and therefore separate more class regions than the set $\S{L}_d$ of max-linear functions.

\subsection{Max-Lin Separability}\label{subsec:separability}

The negative class region implemented by a single max-linear discriminant function is a convex polyhedron \cite{Astorino2002,Bagirov2014,Megiddo1988}. This result indicates that max-lin separability of two sets depends on how we label the classes as positive and negative. Label-independent separability can be achieved by using two instead of a single max-linear discriminant function. Due to Theorem \ref{theorem:warped=max-linear}, these results automatically carry over to warped-product and elastic-product discriminants.

\medskip

 We say, a set $\S{U} \subseteq \R^d$ is \emph{max-lin separable from} $\S{V} \subseteq \R^d$ if there is a max-linear function $f \in \S{L}_d$ such that 
\begin{enumerate}
\item $f(x) < 0$ for all $x \in \S{U}$
\item $f(x) > 0$ for all $x \in \S{V}$.
\end{enumerate}
Note that max-lin separability is equivalent to polyhedral separability introduced by \cite{Megiddo1988}. To present conditions under which two finite sets can be perfectly separated by a max-linear discriminant function, we use the notion of convex hull. Let $\S{U} = \cbrace{x_1, \ldots, x_K} \subseteq \R^d$ be a finite set. The \emph{convex hull} of $\S{U}$ is defined by
\[
\conv(\S{U}) = \cbrace{\lambda_1 x_1 + \cdots + \lambda_K x_K \,:\, \lambda_1 + \cdots +\lambda_K = 1 \text{ and } \lambda_1, \ldots, \lambda_K \in \R_{\geq 0}}.
\]
The next result presents a necessary and sufficient condition of max-lin separability.
\begin{proposition}\label{prop:L-separability}
Let $\S{U}, \S{V} \subseteq \R^d$ be two finite non-empty sets. Then $\S{U}$ is max-lin separable from $\S{V}$ if and only if 
\[
\conv(\S{U}) \; \bigcap \; \S{V} = \emptyset.
\]
\end{proposition}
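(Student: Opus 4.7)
The plan is to prove both directions using the key geometric fact that the negative region $\S{R}_f^-$ of a max-linear function is a convex polyhedron. More precisely, writing $f = \max_{p \in \S{P}} f_p$ with $f_p$ affine, we have $\S{R}_f^- = \bigcap_{p} \{x : f_p(x) \leq 0\}$, which is an intersection of halfspaces, hence convex. This reduces max-lin separability to a question about convex polyhedra containing $\S{U}$ and avoiding $\S{V}$.

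For the forward direction $(\Rightarrow)$, suppose a max-linear $f$ separates $\S{U}$ from $\S{V}$. Then $\S{U} \subseteq \{x : f(x) < 0\} \subseteq \S{R}_f^-$, and since $\S{R}_f^-$ is convex, $\conv(\S{U}) \subseteq \S{R}_f^-$. Every $v \in \S{V}$ satisfies $f(v) > 0$, so $v \notin \S{R}_f^-$. Hence $\conv(\S{U}) \cap \S{V} = \emptyset$. This direction is routine.

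For the reverse direction $(\Leftarrow)$, assume $\conv(\S{U}) \cap \S{V} = \emptyset$. Since $\S{U}$ is finite, $\conv(\S{U})$ is a convex polytope, and by the Minkowski--Weyl theorem it admits a finite halfspace representation
\[
\conv(\S{U}) = \bigcap_{i=1}^{m} \cbrace{x \in \R^d \,:\, a_i\T x \leq b_i}.
\]
The candidate separator is $f(x) = \max_i (a_i\T x - b_i)$, which is max-linear and satisfies $f(u) \leq 0$ for all $u \in \S{U}$ and $f(v) > 0$ for all $v \in \S{V}$ (since $v \notin \conv(\S{U})$). The obstacle is the requirement of \emph{strict} inequality $f(u) < 0$ on $\S{U}$, which fails for $u$ on the boundary of $\conv(\S{U})$ (in particular for extreme points). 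I would resolve this by slightly enlarging the polytope: set
\[
\delta \;=\; \tfrac{1}{2} \min_{v \in \S{V}} \max_{i \in [m]} (a_i\T v - b_i),
\]
which is strictly positive because $\S{V}$ is finite and each term in the outer minimum is positive. Then define the perturbed max-linear function $\tilde f(x) = \max_i(a_i\T x - b_i - \delta)$. For $u \in \S{U}$ we get $\tilde f(u) \leq -\delta < 0$, while for $v \in \S{V}$ there is some $i$ with $a_i\T v - b_i \geq 2\delta$, giving $\tilde f(v) \geq \delta > 0$. Thus $\tilde f \in \S{L}_d$ realizes the desired strict separation.

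The main obstacle is therefore the strict-inequality adjustment in the reverse direction; all other steps are consequences of elementary convex-polyhedral geometry. A minor subtlety worth noting is the degenerate case where $\conv(\S{U})$ is lower-dimensional (for example a single point), which is still covered by the Minkowski--Weyl representation by including both $a_i\T x \leq c$ and $-a_i\T x \leq -c$ for the defining equalities of its affine hull.
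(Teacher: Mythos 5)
Your argument is correct, but it is genuinely different from what the paper does: the paper does not prove Proposition \ref{prop:L-separability} at all, it simply cites Astorino and Gaudioso (Prop.~2.2) for the equivalence of max-lin and polyhedral separability. You instead give a complete, self-contained elementary proof. The forward direction is the same convexity observation anyone would make ($\S{R}_f^-$ is an intersection of closed half-spaces, hence convex, so it absorbs $\conv(\S{U})$ and excludes $\S{V}$). The substance is in your reverse direction: Minkowski--Weyl gives a finite $H$-representation of the polytope $\conv(\S{U})$, and the uniform shift by $\delta = \tfrac12 \min_{v}\max_i (a_i\T v - b_i) > 0$ (positive precisely because $\S{V}$ is finite and disjoint from $\conv(\S{U})$) converts the non-strict inequalities on $\S{U}$ into the strict ones the definition demands while preserving positivity on $\S{V}$. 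This perturbation step is exactly the point a careless reading would miss, and you handle it cleanly, including the lower-dimensional degenerate case. What your approach buys is transparency and self-containment at the cost of invoking Minkowski--Weyl; what the paper's citation buys is brevity. A marginally lighter alternative to Minkowski--Weyl would be to separate each of the finitely many $v \in \S{V}$ from the compact convex set $\conv(\S{U})$ by a strict hyperplane and take the max of those $|\S{V}|$ affine functions, but your version is equally valid and arguably more natural given the polyhedral theme of the paper.
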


\begin{proof}
\cite{Astorino2002}, Prop.~2.2. 
\end{proof}

\medskip

Proposition \ref{prop:L-separability} shows that max-lin separability is asymmetric in the sense that $\conv(\S{U}) \cap\S{V} = \emptyset$ does not imply $\S{U} \cap \conv(\S{V}) = \emptyset$. In other words, the statement that \emph{$\S{U}$ is max-lin separable from $\S{V}$} does not imply the statement that \emph{$\S{V}$ is max-lin separable from $\S{U}$}. As an example, we consider the problem of separating points inside a unit square $\S{U}$ from points outside the square $\S{V} = \overline{\S{U}}$. There are two ways to label the points $x \in \R^2$:
\[
y_1(x) \mapsto \begin{cases}
-1 & x \in \S{U}\\
+1 & x \in \S{V}
\end{cases}
\quad \text{ and } \quad
y_2(x) \mapsto \begin{cases}
-1 & x \in \S{V}\\
+1 & x \in \S{U}
\end{cases}
\]
The first labeling function $y_1(x)$ regards the unit square $\S{U}$ as the negative class region and the second labeling function $y_2(x)$ regards the complement $\S{V}$ of the unit square as the negative class region. Figure \ref{fig:ex_squares} shows an example of two finite subsets $\S{U}' \subseteq \S{U}$ and $\S{V}' \subseteq \S{V}$ such that $\conv(\S{U}') \cap \S{V}' = \emptyset$ and $\S{U}' \cap \conv(\S{V}') \neq \emptyset$. This implies that $\S{U}'$ is max-lin separable from $\S{V}$ but $\S{V}$ is not max-lin separable from $\S{U}$. Thus, we can only separate both sets $\S{U}'$ and $\S{V}'$ with a single max-linear discriminant function if we use the first labeling function $y_1(x)$. This shows that classifiers based on a single max-linear discriminant function are label dependent.

\begin{figure}[t]
\centering
 \includegraphics[width=0.5\textwidth]{./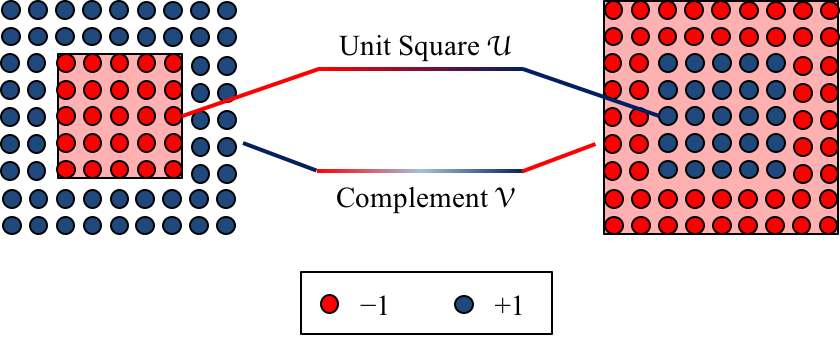}
\caption{Two different labelings of points from the unit square $\S{U}$ and its complement $\S{V}$. Points with negative (positive) class labels are shown in red (blue). The red shaded regions depict the convex hull of the negatively labeled points. The left figure shows that the intersection of the convex hull $\conv(\S{U})$ and $\S{V}$ is empty. The right figure shows that the convex hull $\conv(\S{V})$ includes the unit square $\S{U}$ as subset.}
\label{fig:ex_squares} 
\end{figure}

To obtain a label-independent classifier, we use two max-linear discriminant functions. As in the example, we assume that $\S{U}$ is max-linear separable from $\S{V}$ without loss of generality. We distinguish between the two alternatives to label the classes:
\begin{enumerate}
\item Labeling function $y_1(x)$:\\
By assumption, there is max-linear discriminant function $f$ that correctly separates the sets $\S{U}$ and $\S{V}$. The function $g = f - 0$ is the difference of two max-linear functions and trivially again a max-linear function that correctly separates both sets. 
\item Labeling function $y_2(x)$:\\
The discriminant $-f$ separates both sets $\S{U}$ and $\S{V}$, where $f$ is the max-linear discriminant from the first case. The discriminant $-f$ is not max-linear but the discriminant $g' = 0 - f$ is the difference of two max-linear functions that correctly separates both sets. 
\end{enumerate}
 
\medskip

Next, we show that the negative class region implemented by a max-linear discriminant function is a convex polyhedron. A \emph{polyhedron} $\S{P} \subseteq \S{X}$ is the intersection of finitely many closed half-spaces, that is
\[
\S{P}= \S{P}(A) = \cbrace{x \in \S{X} \,:\, Ax \leq 0},
\]
where $A \in \R^{m \times d}$ is a matrix. Recall that $\S{X}$ is an augmented input space such that the first column of matrix $A$ represents the bias vector. By $\Pi_{\S{X}}$ we denote the set of all polyhedra in $\S{X}$. The next result relates polyhedra and negative class regions defined by max-linear functions.

\begin{proposition}\label{prop:region=polyhedron} 
Let $2^{\S{X}}$ be the set of all subsets of $\S{X}$. The map
\[
\phi: \S{L}_d \rightarrow 2^{\S{X}}, \quad f \mapsto \S{R}_f^-
\]
satisfies $\phi\args{\S{L}_d} = \Pi_{\S{X}}$.
\end{proposition}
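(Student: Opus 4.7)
The plan is to prove the set equality $\phi(\S{L}_d)=\Pi_{\S{X}}$ by showing both inclusions separately, exploiting the fact that a max-linear function is below zero precisely when each of its component generalized-linear functions is below zero. Since every generalized linear function $f_p(x)=w_p\T\phi_p(x)$ is in particular a linear function of $x$, each sublevel set $\{x\in\S{X} : f_p(x)\leq 0\}$ is the intersection of $\S{X}$ with a closed half-space of $\R^d$. This is the structural observation that powers both directions.

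For the inclusion $\phi(\S{L}_d)\subseteq\Pi_{\S{X}}$, I would fix $f\in\S{L}_d$ with representation $f(x)=\max\{f_p(x):p\in\S{P}\}$ over a finite index set $\S{P}\neq\emptyset$. A standard rewriting gives
\[
\S{R}_f^-=\{x\in\S{X}:f(x)\leq 0\}=\bigcap_{p\in\S{P}}\{x\in\S{X}:f_p(x)\leq 0\}.
\]
Each $f_p$, being a generalized linear function, can be expressed as $a_p\T x$ for some $a_p\in\R^d$ (absorbing the linear transformation $\phi_p$ into the coefficient vector), so each constituent set is an affine half-space; their finite intersection is therefore a polyhedron of the form $\S{P}(A)$ where the rows of $A$ are the $a_p\T$.

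For the reverse inclusion $\Pi_{\S{X}}\subseteq\phi(\S{L}_d)$, I would take an arbitrary polyhedron $\S{P}(A)\in\Pi_{\S{X}}$ with rows $a_1\T,\ldots,a_m\T$ and define the component functions $f_i(x)=a_i\T x$ using the identity map as the linear transformation $\phi_i$. The resulting $f(x)=\max_{i\in[m]}f_i(x)$ lies in $\S{L}_d$ by construction, and clearly $f(x)\leq 0$ if and only if $a_i\T x\leq 0$ for every $i$, which is exactly the condition $x\in\S{P}(A)$. Hence $\phi(f)=\S{P}(A)$.

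The argument is essentially a duality between the $\max$ operation over linear forms and the $\bigcap$ operation over half-spaces, so the main obstacle is not conceptual but bookkeeping: one must be careful that the augmented form $\S{X}=\{1\}\times\R^{d-1}$ is respected on both sides, so that biases are correctly encoded in the first coordinate of $a_p$ and in the first column of $A$, and that degenerate cases (the whole space $\S{X}$, or the empty polyhedron) are covered by admitting constant linear forms with appropriate sign on the first component. Once these conventions are in place, both inclusions are immediate.
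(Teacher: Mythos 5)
Your proof is correct, but it takes a genuinely different route from the paper: the paper's entire proof is a one-line appeal to the known equivalence of max-lin separability and polyhedral separability due to Megiddo, whereas you give a direct, self-contained argument. Both rest on the same structural fact---$\max_p f_p(x)\leq 0$ holds iff $f_p(x)\leq 0$ for every $p$, so $\S{R}_f^-$ is a finite intersection of half-spaces and conversely every system $Ax\leq 0$ arises as the negative region of $\max_i a_i\T x$---but you make this explicit rather than outsourcing it. Your reduction of each generalized linear component $w_p\T\phi_p(x)$ to an ordinary linear form $a_p\T x$ is exactly the content of Lemma \ref{lemma:w'phi(x) = w_p'x} in the appendix, so you may simply cite that lemma instead of re-deriving it. What your approach buys is transparency and independence from the external reference; what the paper's approach buys is brevity and an explicit link to the polyhedral-separability literature it builds on elsewhere (e.g.\ in Section \ref{subsec:separability}). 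One small point worth keeping from your own remarks: the surjectivity direction needs the index set to be nonempty, which matches the paper's standing assumption $\S{P}\neq\emptyset$ and the implicit $m\geq 1$ in $A\in\R^{m\times d}$, and the bias bookkeeping is automatic because both the first coordinate of $x\in\S{X}$ and the first column of $A$ carry the affine part.
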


\begin{proof}
Follows from the equivalence of max-lin separability and polyhedral separability \cite{Megiddo1988}.
\end{proof}

Proposition \ref{prop:region=polyhedron} makes two statements: First, every negative class region implemented by a max-linear discriminant is a polyhedron; and second, every polyhedron coincides with the negative class region of some max-linear discriminant. Figure \ref{fig:ex_separable} presents examples of max-lin separable sets and an example of sets that are not max-lin separable.

\subsection{Learning}

This section presents a stochastic subgradient method for learning a max-linear discriminant function.

\subsubsection{Empirical Risk Minimization}

This section describes learning as the problem of minimizing a differentiable empirical risk function and presents a stochastic gradient descent rule. 

\medskip

Let $\S{Z} = \S{X} \times \S{Y}$ be the product of input space $\S{X}$ and output space $\S{Y}$. Consider the hypothesis space $\S{H}$ consisting of functions $f_\theta: \S{X} \rightarrow \S{Y}$ with adjustable parameters $\theta \in \R^q$. Suppose that we are given a training set $\S{D} = \cbrace{\args{x_1, y_1}, \ldots, \args{x_N, y_N}} \subseteq \S{Z}$ consisting of $N$ input-output examples $(x_i, y_i)$ drawn i.i.d.~from some unknown joint probability distribution on $\S{Z}$. According to the empirical risk minimization principle \cite{Vapnik1999}, learning amounts in finding a parameter configuration $\theta_* \in \R^q$ that minimizes the empirical risk 
\[
R_N(\theta) = \frac{1}{N}\sum_{i=1}^N \ell\args{y_i, f_\theta\!\args{x_i}},
\]
where $\ell: \S{Y} \times \R \rightarrow \R$ is a loss function that measures the discrepancy between the predicted output value $\hat{y}_i = f_\theta\!\args{x_i}$ and the actual output $y_i$ for a given input $x_i$. To improve generalization performance, one often considers a regularized empirical risk of the form 
\[
R_N^{\,\rho}(\theta) = R_N(\theta) + \lambda \rho(\theta),
\]
where $\lambda \geq 0$ is the regularization parameter and $\rho:\R^q \rightarrow \R$ is a non-negative regularization function. For $\lambda = 0$, we recover the standard empirical risk. Common choices of regularization functions are $\rho(\theta) = \normS{\theta}{^p}$ with $p = 1, 2$.

Let $(x, y) \in \S{D}$ be a training example. Suppose that the loss $\ell(y, f_\theta(x))$ and the regularization function $\rho(\theta)$ are both differentiable as functions of the parameter $\theta$. In this case, we can apply stochastic gradient descent to minimize the regularized empirical risk $R_N^{\,\rho}(\theta)$. The update rule of stochastic gradient descent is of the form
\begin{align}\label{eq:sgd}
\theta \; \leftarrow \; \theta - \eta \args{\ell' \,\nabla f_\theta(x) + \lambda \nabla \rho(\theta)},
\end{align}
where  $\eta > 0$ is the learning rate, $\ell'$ is the derivative of the loss $\ell(y, \hat{y})$ at the predicted output value $\hat{y} = f_\theta(x)$, $\nabla f_\theta(x)$ is the gradient of $f_\theta(x)$ with respect to parameter $\theta$, and $\nabla \rho(\theta)$ is the gradient of the regularization function. We will use the gradient-descent update rule as template for learning max-linear discriminant functions. 
 
\subsubsection{Learning Max-Linear Discriminant Functions}

In general, neither the loss, the regularization function, nor the max-linear discriminant function is differentiable. Consequently, 
the stochastic gradient descent update rule \eqref{eq:sgd} is not applicable for learning max-linear discriminant functions. In this section, we present the main idea to extend update rule \eqref{eq:sgd} to a stochastic subgradient rule for learning max-linear discriminants using a simplified setting. For a more general and technical treatment, we refer to Section \ref{sec:basic-definitions}.

\medskip

We make the following simplifying assumptions:
\begin{enumerate}
\itemsep0em
\item 
We restrict to the two-category case with $\S{Y} = \cbrace{\pm 1}$ as output space. 

\item
The hypothesis space $\S{H}$ is the set $\S{L}_{c, d}$ of max-linear functions with $c$ component functions. 

\item 
The loss function is differentiable. 

\item 
The regularization function is differentiable and of the form $\rho(\theta) = \normS{\theta}{^2}$. 
\end{enumerate}
As advised in the previous sections, we learn a max-linear function for every class separately. Suppose that $f_\theta: \S{X} \rightarrow \S{Y}$ is max-linear with $c$ component functions $f_1, \ldots, f_c: \S{X} \rightarrow \S{Y}$ of the form
\[
f_p(x) = w_p\!{\T}\phi_p(x), 
\] 
where the $w_p \in \R^m$ are the augmented weight vectors. We stack the weight vectors $w_p$ to obtain the parameter vector 
\[
\theta = \begin{pmatrix}
w_1\\
\vdots\\
w_c
\end{pmatrix} \in \R^q
\]
with $q = c\cdot m$. The stochastic subgradient update rule for minimizing the regularized empirical risk $R_N^{\,\rho}(\theta)$ is as follows:
\begin{enumerate}
\item Select an active component $f_p \in \S{A}_f(x)$.
\item Update the parameters according to the rule
\begin{align}
\label{eq:update-rule-max-linear-w-p}
w_p 		&\leftarrow w_p - \eta \args{\ell' \phi_p(x) - \lambda w_p}\\[1ex]
\label{eq:update-rule-max-linear-w-k}
w_k 	&\leftarrow w_k - \eta \lambda w_k
\end{align}
for all $k \in [c]\setminus\cbrace{p}$. 
\end{enumerate}
We make a few comments on the update rules: First, update rule \eqref{eq:update-rule-max-linear-w-p} minimizes the loss and update rule \eqref{eq:update-rule-max-linear-w-k} minimizes the regularization term.  Second, only the weight vector $w_p$ of the selected active component is updated for minimizing the loss but the entire parameter vector $\theta$ summarizing the $c$ weight vectors $w_k$ is updated for minimizing the regularization term. Third, the learning rate $\eta$ absorbs the factor $2$ of the gradient $\nabla \rho(w_k) = 2w_k$. Fourth, it is common practice to exclude the bias from regularization.

In Section \ref{sec:basic-definitions}, we derive a more general subgradient update rule under the assumption that the loss and regularization function are both convex but not necessarily differentiable.

\subsubsection{Examples of Stochastic Subgradient Update Rules}

\begin{table}[t]
\centering
\begin{tabular}{ll@{\qquad}l@{\qquad}l}
\hline
\hline
\\[-2ex]
Adaline 
& $\S{Y} = \cbrace{\pm 1}$ & $\ell = \frac{1}{2}\!\argsS{y - \hat{y}}{^2}$ & $\ell' = -2\args{y - \hat{y}}$\\[1ex]
\hline
\\[-2ex]
Perceptron 
& $\S{Y} = \cbrace{\pm 1}$ & $\ell = \max\cbrace{0, -y\cdot \hat{y}}$ & $\ell' = -y \cdot \mathbb{I}_{\cbrace{\ell > 0}}$\\[1ex]
\hline
\\[-2ex]
Margin Perceptron 
& $\S{Y} = \cbrace{\pm 1}$ & $\ell = \max\cbrace{0, \xi-y\cdot \hat{y}}$ & $\ell' = -y \cdot \mathbb{I}_{\cbrace{\ell > 0}}$\\[1ex]
\hline
\\[-2ex]
Linear SVM
& $\S{Y} = \cbrace{\pm 1}$ & $\ell = \lambda \normS{\theta}{^2} + \max\cbrace{0, 1 - y\cdot \hat{y}}$
&$\ell' = 2\lambda \theta -y \cdot \mathbb{I}_{\cbrace{1-y\cdot \hat{y} > 0}}$\\[1ex]
\hline
\\[-2ex]
Logistic Regression 
& $\S{Y} = \cbrace{0, 1}$ & $\ell = -y \log(\hat{\sigma}) - (1-y) \log(1-\hat{\sigma})$ &$\ell' = \hat{\sigma}-y$ \\
&& $\hat{\sigma} = 1/ \args{1+\exp(-\hat{y})}$\\[1ex]
\hline
\hline
\end{tabular}
\caption{Loss functions $\ell(y, \hat{y})$ as functions of $\hat{y} = f_\theta(x)$ and their derivatives $\ell'$ as a function of $\hat{y}$. As an exception, the derivative $\ell'$ of the linear SVM is given as a function of $\theta$ (see main text for further details). The indicator function $\mathbb{I}_{\text{bool}}$ is $1$ if the expression \texttt{bool} is \texttt{true} and $0$ otherwise.}
\label{tab:loss-functions}
\end{table}

In this section, we present examples of update rule \eqref{eq:update-rule-max-linear-w-p} by specifying the loss function $\ell$, its derivative $\ell'$ and the linear transformations $\phi_p$. Examples of loss functions and their derivatives are summarized in Table \ref{tab:loss-functions}. Unless otherwise stated, we assume that $\ell$ is the perceptron loss and $(x, y) \in \S{D}$ is a training example.

\paragraph*{Perceptron Learning of Max-Linear Functions}
Suppose that the linear transformation $\phi_p = \id$ is the identity. In this case, the hypothesis space consists of max-linear functions 
\[
f(x) = \max \cbrace{f_p(x) \,:\, p \in \S{P}}
\]
with component functions $f_p(x) = w_p\T x$. Update rule \eqref{eq:update-rule-max-linear-w-p} reduces to the standard perceptron learning rule applied on an active component $f_p \in \S{A}_f(x)$: If $f$ misclassifies $x$ then update the weights $w_p$ according to the rule
\begin{align*}
w_p &\leftarrow w_p + \eta \cdot y \cdot x.
\end{align*}
If $f$ correctly classifies $x$ no update rule is applied. Using the notations of Table \ref{tab:loss-functions}, we can compactly rewrite the perceptron update rule as 
\begin{align*}
w_p &\leftarrow w_p + \eta \cdot y \cdot x \cdot \mathbb{I}_{\cbrace{-y f(x) > 0}},
\end{align*}
where the indicator function $\mathbb{I}_{\text{bool}}$ evaluates to $1$ if the expression \texttt{bool} is \texttt{true} and to $0$ otherwise. Thus, the term $\mathbb{I}_{-y f(x) > 0}$ ensures that the update rule is only applied if $f$ misclassifies  $x$.

\paragraph*{Perceptron Learning of Warped-Product Functions}
Suppose that $\S{Q} \subseteq \S{P}_{e,d}$ is a subset. Warped-product functions can be equivalently written as 
\[
f(x) = w \ast x = \max\cbrace{w\T M_p x \,:\, p \in \S{Q}}
\]
where $M_p$ is the warping matrix of warping path $p$. Using $\phi_p = M_p x$ as linear transformation, the perceptron update rule for warped-product functions is of the form
\begin{align*}
w &\leftarrow w + \eta \cdot y\cdot  M_p x \cdot \mathbb{I}_{\cbrace{-y f(x) > 0}};
\end{align*}
where $p$ is an optimal warping path for $w \ast x$ in $\S{Q}$. 

\paragraph*{Perceptron Learning of Elastic-Product Functions}
Suppose that $\S{Q} \subseteq \S{P}_{d,e}$ is a subset. Elastic-product functions can be represented by 
\[
f(x) = W \otimes x = \max\cbrace{\inner{W, \phi_p(x)} \,:\, p \in \S{Q}}
\]
where the linear transformations $\phi_p(x) = X_p$ are the $p$-matrices of $x$ for all $p \in \S{Q}$. The perceptron update rule for elastic-product functions is given by
\begin{align*}
W &\leftarrow W + \eta \cdot y \cdot X_p \cdot \mathbb{I}_{\cbrace{-y f(x) > 0}},
\end{align*}
where $p$ is an optimal warping path for $W \otimes x$ in $\S{Q}$. Observe that only those elements $w_{ij}$ of weight matrix $W$ are updated for which $(i,j)$ is an element of the optimal warping path $p$. 

\paragraph*{Max-Linear Support Vector Machine}

To be consistent with the stochastic subgradient update rules \eqref{eq:update-rule-max-linear-w-p} and \eqref{eq:update-rule-max-linear-w-k}, we treat the linear SVM as a special case of a regularized margin perceptron. Suppose that $\theta \in \R^q$ is a parameter vector. Then the loss of the linear SVM is of the form
\[
\ell(y, \hat{y}) = \max\cbrace{0, 1 - y\cdot \hat{y}} + \lambda \normS{\theta}{^2},
\]
where $\hat{y} = f_\theta(x)$. The first term corresponds to the loss of the margin perceptron with parameter $\xi = 1$. The second term is regularization term. The derivative of $\ell(y, \hat{y})$ as a function of $\hat{y}$ is defined by
\[
\ell' = -y \cdot \mathbb{I}_{\cbrace{1-y\cdot \hat{y} > 0}}. 
\]
and the gradient of the regularization terms is $2 \lambda \theta $. Combining both terms gives the derivative of the loss as a function of the parameter vector $\theta$ as shown in Table \ref{tab:loss-functions}.

\medskip

We extend the update rule of the linear SVM to max-linear discriminant functions $f_\theta$ with parameter vector $\theta \in \R^q$ and component functions $f_p(x) = w_p\T x$. We select an active component $f_p$ of $f_\theta$ at $x$. Then the update rule of the max-linear SVM is given by
\begin{align*}
w_p 	&\leftarrow w_p - \eta \cdot \args{\lambda \cdot w_p - y \cdot x \cdot \mathbb{I}_{\cbrace{1-y\cdot f_\theta(x) > 0}}}\\
w_k	&\leftarrow w_k - \eta \lambda w_k
\end{align*}
for all $k \in \S{P} \setminus \cbrace{p}$. We obtain the warped-product and elastic-product SVM by substituting $M_p x$ and $X_p$, rep., for $x$ in the first update rule. 

\section{Experiments}

\newcommand{\asm}{\textsc{sm}}
\newcommand{\awp}{\textsc{wp}}
\newcommand{\aep}{\textsc{ep}}
\newcommand{\aml}{\textsc{ml}}
\newcommand{\ann}{\textsc{nn}}
\newcommand{\alvq}{\textsc{glvq}}

The goal of this section is to assess the performance and behavior of warped-linear classifiers. Three experiments were conducted:
\begin{enumerate}
\itemsep0em
\item Comparison of classifiers in DTW spaces
\item Comparison of polyhedral classifiers
\item Experiments on label dependency
\end{enumerate}

\subsection{Comparison of Classifiers in DTW Spaces}\label{subsec:exp01}

This section compares the performance of elastic-product classifiers against state-of-the-art prototype-based classifiers using the DTW-distance.

\subsubsection{Data}

We used $29$ datasets from the UCR time series classification and clustering repository \cite{Chen2015}. The datasets were chosen to cover various characteristics such as application domain, length of time series, number of classes, and sample size. Table \ref{tab:results} shows the datasets. 

\subsubsection{Algorithms}

The following classifiers were considered:
\begin{center}
\begin{tabular}{l@{\qquad}l@{\qquad}c}
\hline
Notation & Algorithm & Reference\\
\hline
\\[-2ex]
\ann   & nearest neighbor classifier	  & \cite{Hastie2001}\\
\alvq  & asymmetric generalized LVQ 	  & \cite{Jain2018}\\
\asm   & softmax regression            & \cite{Hastie2001}\\
\aep   & elastic-product classifier    & proposed\\
\hline
\end{tabular}
\end{center}
The \asm ~and \aep ~methods used the multinomial logistic loss. No restrictions were imposed on the set of warping paths for \aep. 
We compared \aep ~with \ann ~and \alvq, because both methods are nearest neighbor methods that also directly operate on DTW-spaces. We compared \aep ~with \asm ~to assess the importance of incorporating time-warp invariance. Recall that \asm ~can be regarded as a special case of \aep ~with elasticity one. Note that \asm ~can be applied, because all time series of a UCR dataset have identical length.

\subsubsection{Experimental Protocol}

For every dataset, we conducted ten-fold cross validation and reported the average classification accuracy, briefly called accuracy, henceforth. Since the experimental protocols and the folds of the datasets are identical, the results of \ann ~and \alvq ~were taken from \cite{Jain2018}.

The \asm ~and \aep ~classifiers minimized the empirical risk by applying the stochastic subgradient method with \emph{Adaptive Moment Estimation} (ADAM) proposed by \cite{Kingma2015}. The decay rates of ADAM for the first and second moment were set to $\beta_1 = 0.9$ and $\beta_2 = 0.999$, respectively. The maximum number of epochs (cycles through the training set) were set to $5,000$ and the maximum number of consecutive epochs without improvement to $100$, where improvement refers to a decrease of the empirical risk. The initial learning rates of all four classifiers were selected according to the following procedure: 

\begin{small}
\begin{algorithm}{Selection of initial learning rate}
\label{alg:initial_lr}
\atab{1} \textbf{Procedure}:\\
\atab{2} initialize learning rate $\eta = 0.8$\\
\atab{2} initialize classifier\\
\atab{2} \textbf{repeat} \\
\atab{4} set $\eta = \eta / 2$ and $s = 0$ \hfill //$s$ counts the number of epochs without improvement)\\
\atab{4} \textbf{for} every epoch $t \leq 100$ \textbf{do}\\
\atab{6} apply stochastic learning rule\\
\atab{6} set $s = s + 1$ if empirical risk is not lower\\
\atab{6} compute ratio $\rho = s/t$\\
\atab{2}	\textbf{until} $\rho < 0.2$\\
\atab{1} \textbf{Return:} learning rate $\eta$ 
\end{algorithm}
\end{small}
For \aep, we selected the  elasticity $e \in \cbrace{1, 2, 3, 4, 5, 7, 10, 15, 20, 25, 30, 35, 40, 45, 50}$ that gave the minimum empirical risk. 

\subsubsection{Results}

Table \ref{tab:ranks} shows the rank distributions and Figure \ref{fig:pairwise} shows the pairwise comparison of the four classifiers based on the results presented in Table \ref{tab:results}. 

\begin{table}[b]
\centering
\begin{tabular}{ll@{\qquad}rrrr@{\qquad}cc}
\hline
\hline
& &\multicolumn{4}{c}{Rank} & & \\
\multicolumn{2}{l}{Classifier} & 1 & 2 & 3 & 4 & avg & std\\
\hline
\\[-2ex]
\ann    &nearest-neighbor& 10 & 7 & 6 & 6 & 2.3 & 1.1\\
\alvq   &generalized LVQ& 9 & 8 & 7 & 5 & 2.3 & 1.1\\
\asm    &softmax regression& 2 & 4 & 5 & 18 & 3.3 & 1.0\\
\aep    &elastic-product classifier& 11 & 7 & 11 & 0 & 2.0 & 0.9\\
\hline
\hline
\end{tabular}
\caption{Rank distribution, average rank, and standard deviation of the four classifiers nn, glvq, sm, and ep based on the results shown in Table \ref{tab:results}. The average accuracy of every classifier on a given dataset was ranked, where ranks go from $1$ (highest accuracy) to $4$ (lowest accuracy).}
\label{tab:ranks}
\end{table}

\begin{figure}[t]
\centering
\begin{tabular}{c@{\hspace{3cm}}c}
\textbf{Winning Percentage} & \textbf{Mean Percentage Difference}\\[1ex]
\includegraphics[width=0.35\textwidth]{./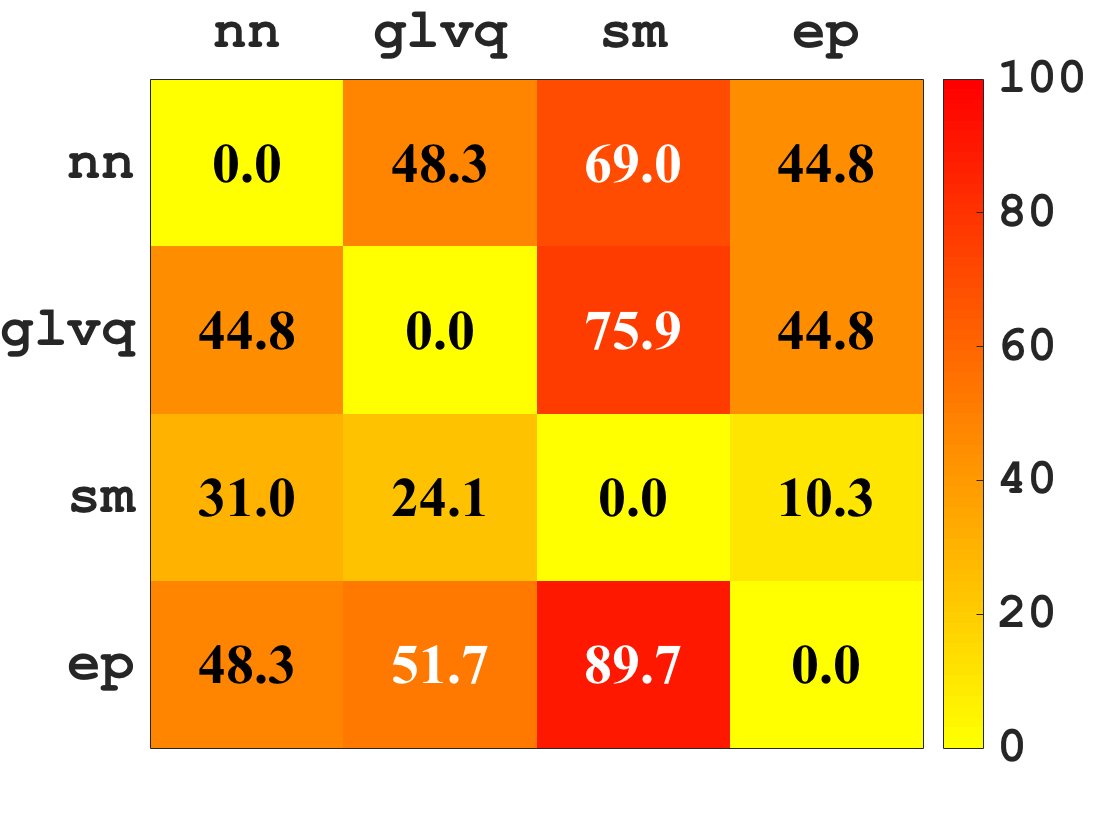} &
\includegraphics[width=0.35\textwidth]{./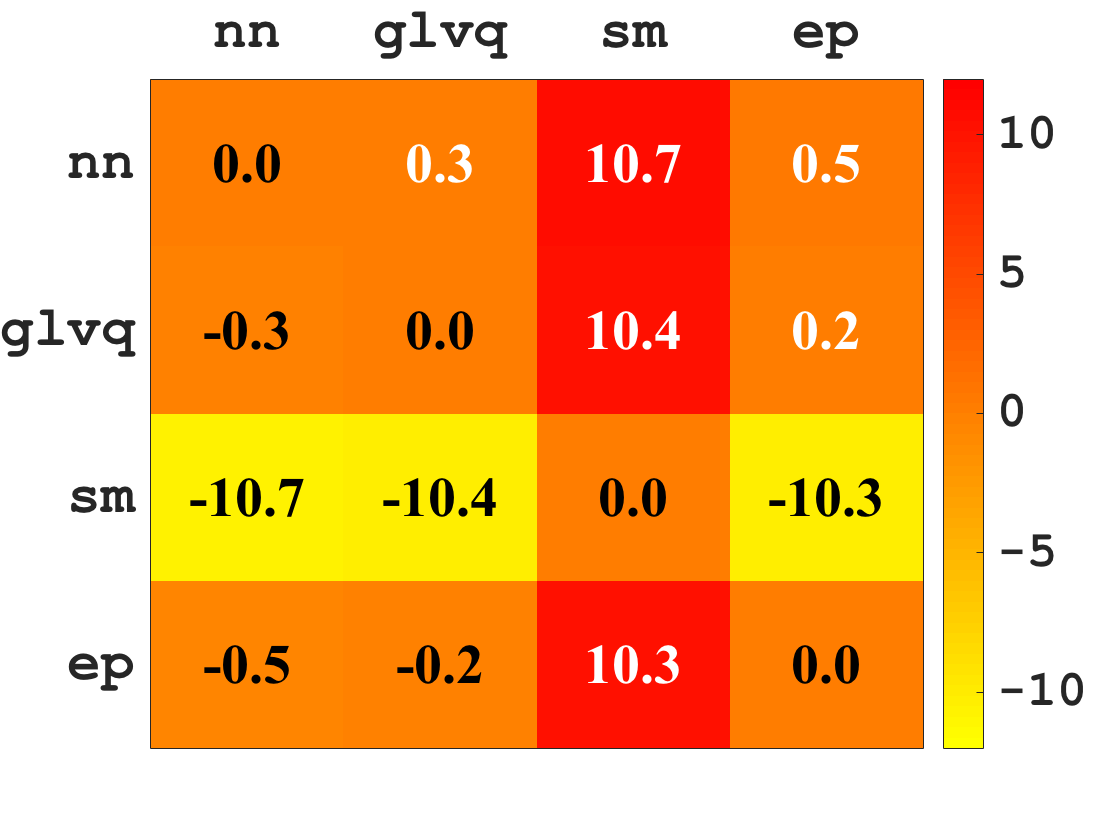}
\end{tabular}
\caption{Pairwise comparison of seven classifiers based on the results  shown in Table \ref{tab:results}. Left: Pairwise winning percentages $w_{ij}$, where classifier in row $i$ wins $w_{ij}$ percentages of all competitions against the classifier in column $j$. Right: Pairwise mean percentage difference $a_{ij}$ in accuracy, where the accuracy of classifier in row $i$ is $a_{ij}$ percentages better on average than the accuracy of the classifier in column $j$. A definition of both measures is given in Appendix \ref{sec:performance-measures}.} 
\label{fig:pairwise}
\end{figure}

The linear classifier \asm ~performed worst by a large margin in an overall comparison (see Table \ref{tab:ranks}) and in a pairwise comparison (see Figure \ref{fig:pairwise}). Since \alvq ~with a single prototype per class performed significantly better, these findings suggest that \asm ~is unable to filter out the variation in temporal dynamics. This conclusion is in line with the claim that the geometric structure of time series data is non-Euclidean and therefore linear models are often not an appropriate choice.

In an overall comparison, \aep ~performed best with average rank $2.0$ followed by the other two DTW classifiers, \ann ~and \alvq, both with average rank $2.3$. In a pairwise comparison, the three DTW classifiers are comparable with slight advantages for \aep ~with respect to winning percentages and \ann ~with respect to mean percentage difference. The rank distribution suggests that the three DTW classifiers complement one another with regard to predictive performance.

The advantage of \aep ~over the other two DTW classifiers \ann ~and \alvq ~is its efficiency. To see this, Table \ref{tab:time} shows the computational cost for classifying a single test example under the assumption that all time series are of length $l$. On average, \aep ~is almost three orders of magnitudes faster than \ann ~and about one order of magnitude faster than \alvq. The speed-up factor of \aep ~compared to prototpye-based methods with a single prototype per class is $l/e$, where the length $l$ ranges from $24$ to $720$ and the elasticity $e$ is selected from $1$ to $50$ in this experiment. Note that elasticity $e = 1$ refers to the linear model \asm. Thus, the elasticity parameter $e$ can be regarded as a control parameter to trade computation time and complexity of the function space (VC dimension).

\begin{table}[h]
\centering
\begin{tabular}{l@{\qquad}c@{\qquad}c@{\qquad}c@{\qquad}c}
\hline
\hline
classifier & \ann & \alvq & \asm & \aep \\
\hline
\\[-2ex]
complexity & $\mathcal{O}\args{N\cdot l^2}$ & $\mathcal{O}\args{K\cdot l^2}$ & $\mathcal{O}\args{K\cdot l}$ & $\mathcal{O}\args{K\cdot l\cdot e}$\\
factor & $1$ & $N/K$ & $l \cdot N / K$  & $(l \cdot N) / (e \cdot K)$ \\
avg factor& $1$ & $123.3$ & $21112.7$ & $844.5$\\
\hline
\hline
\multicolumn{5}{l}{\footnotesize
$N$ = \# training examples $\bullet$ $K$ = \# classes $\bullet$ $l$ = length of time series $\bullet$ $e$ = elasticity.}
\end{tabular}
\caption{Computational effort for classifying a single test example. The first row shows the complexity under the assumption that all time series are of fixed length $l$. The second row shows the speed-up factor of a classifier compared to the nn classifier. The last row shows the average speed-up factor over all UCR datasets from Table \ref{tab:results}.}
\label{tab:time}
\end{table}

\subsection{Comparison of Polyhedral Classifiers}\label{subsec:exp02}

This section compares the performance of different polyhedral classifiers. 

\subsubsection{Data}

We used $15$ datasets from the UCR time series classification and clustering repository \cite{Chen2015} and $12$ datasets from the UCI Machine Learning repository \cite{Lichman2013}. Table \ref{tab:results:ucruci} shows the selected UCR and UCI datasets.

\subsubsection{Algorithms}
We considered the following classifiers:
\begin{itemize}
\itemsep0em 
\item Softmax regression (\asm)
\item Warped-product classifier (\awp)
\item Elastic-product classifier (\aep)
\item Max-linear classifier (\aml)
\end{itemize}
All classifiers used the multinomial logistic loss. We imposed no restrictions on the set of warping paths for both warped-linear classifiers \awp ~and \aep. Warped-product classifiers were applied on an augmented input space with leading and trailing zero as suggested by Prop.~\ref{prop:W'_d = L_d}.

\subsubsection{Experimental Protocol}

We performed holdout validation on all datasets. For time series data, we used the train-test split provided by the UCR repository. We randomly split the datasets from the UCI repository into a training and test set with a ratio of $2:1$. 

All classifiers applied the stochastic subgradient method using ADAM with decay rates $\beta_1 = 0.9$ and $\beta_2 = 0.999$ for the first and second moment, respectively. The maximum number of epochs were set to $5,000$ and the maximum number of consecutive epochs without improvement to $250$. The initial learning rates of of all four classifiers were picked according to Algorithm \ref{alg:initial_lr}. The elasticity $e \in \cbrace{1, 2, 3, 4, 5, 7, 10, 15, 20}$ of \awp, \aep, and \aml ~with minimum empirical risk was selected.

\subsubsection{Results and Discussion}

\begin{table}[t]
\centering
\begin{tabular}{l@{\qquad}rrrrrr@{\qquad\qquad}l@{\qquad}rrrrrr}
\hline
\hline
& \multicolumn{4}{c}{rank} & & & & \multicolumn{4}{c}{rank} \\
UCR & 1 & 2 & 3 & 4 & avg & std & UCI &  1 & 2 & 3 & 4 & avg & std\\
\hline
\\[-2ex]
\asm & 1 & 2 & 6 & 6 & 3.1 & 0.88 & \asm & 5 & 1 & 5 & 1 & 1.7 & 1.07\\
\awp & 8 & 2 & 1 & 4 & 2.1 & 1.29 & \awp & 0 & 0 & 2 & 10 & 3.1 & 0.37\\
\aep & 6 & 9 & 0 & 0 & 1.6 & 0.49 & \aep & 5 & 5 & 1 & 1 & 1.5 & 0.90\\
\aml & 3 & 3 & 6 & 3 & 2.6 & 1.02 & \aml & 4 & 4 & 4 & 0 & 1.6 & 0.82\\
\hline
\hline
\end{tabular}
\caption{Rank distribution, average rank, and standard deviation of the four classifiers \asm, \awp, \aep, and \aml ~on selected UCR and UCI datasets based on the results shown in Table \ref{tab:results:ucruci}. The average accuracy of every classifier on a given dataset was ranked, where ranks go from $1$ (highest accuracy) to $4$ (lowest accuracy).}
\label{tab:ranks:ucruci}
\end{table}

\begin{figure}[t]
\centering
\begin{tabular}{c@{\qquad\qquad}c}
UCR & UCI\\[1ex]
\includegraphics[width=0.35\textwidth]{./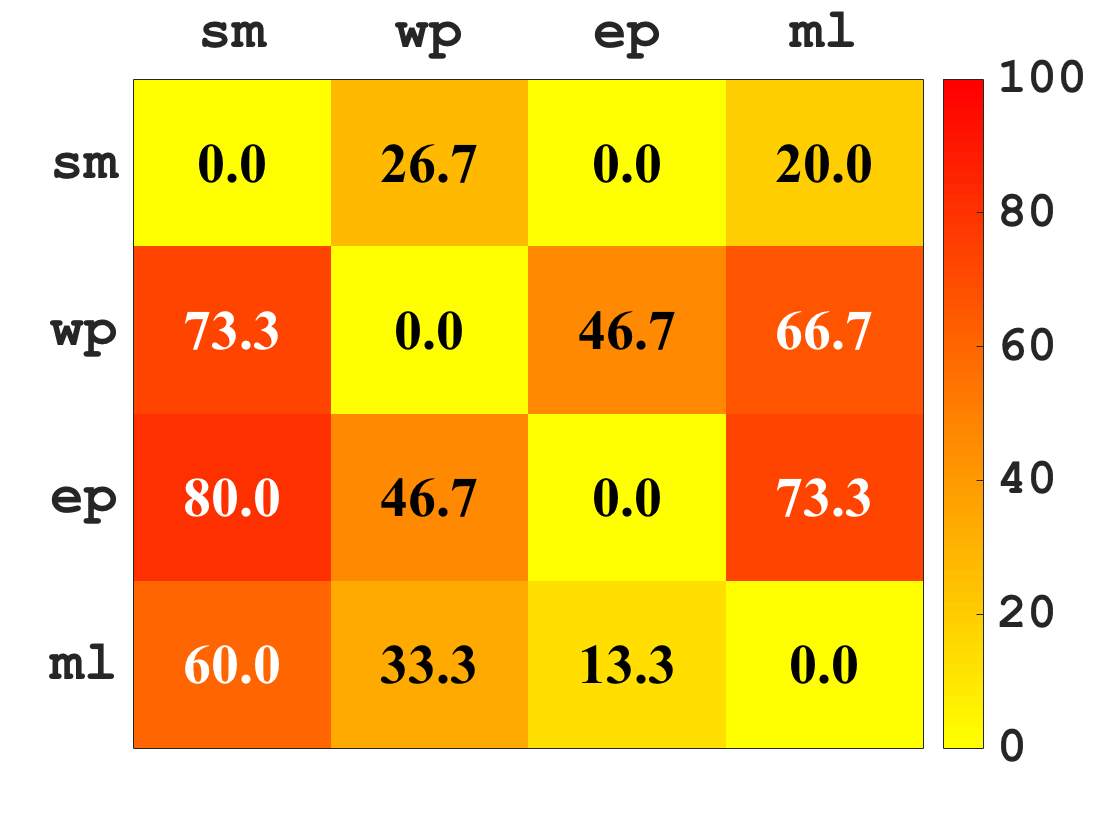} &
\includegraphics[width=0.35\textwidth]{./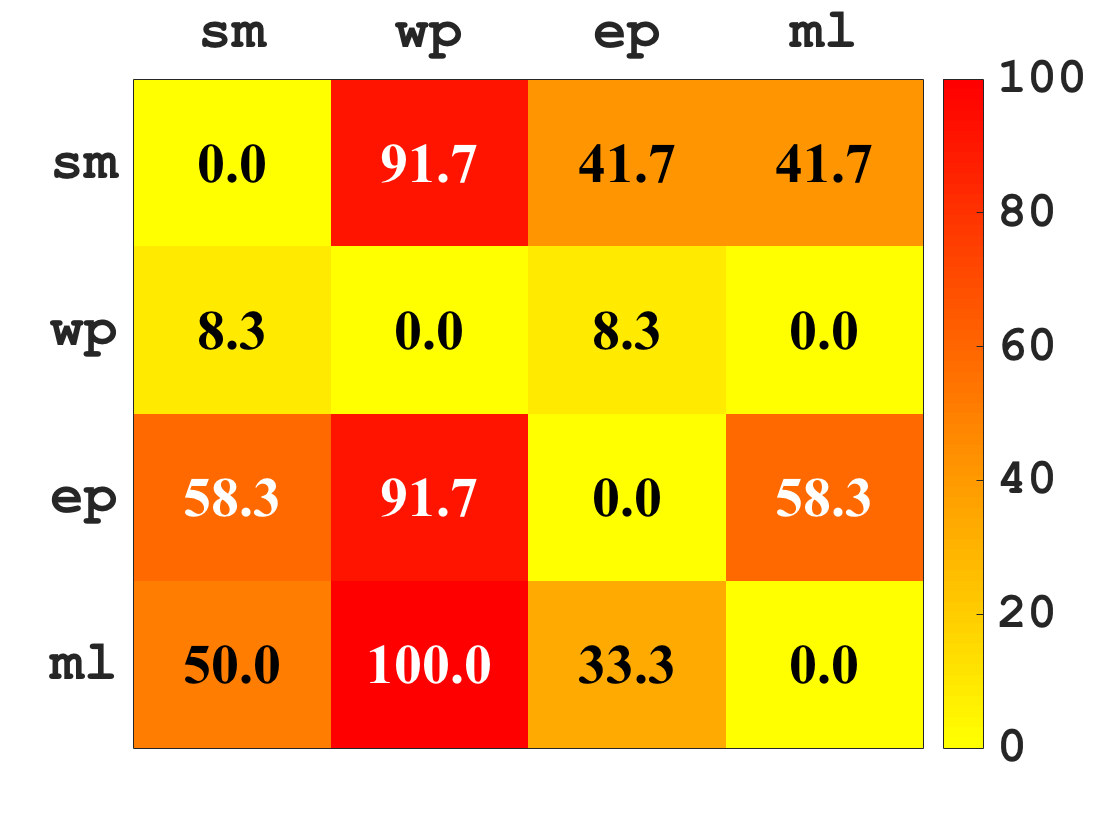}
\end{tabular}
\caption{Pairwise comparison of the four classifiers sm, wp, ep, and ml based on the results of Table \ref{tab:results:ucruci}. The winning percentage $w_{ij}$ shows that classifier in row $i$ wins $w_{ij}$ percentages of all competitions against the classifier in column $j$.} 
\label{fig:ucruci_pw}
\end{figure}

Table \ref{tab:ranks:ucruci} shows the rank distributions and Figure \ref{fig:ucruci_pw} shows the pairwise comparison of the four polyhedral classifiers based on the results presented in Table \ref{tab:results:ucruci}.

The linear classifier \asm ~is not competitive on the UCR time series data but performed only slightly worse than the best polyhedral classifiers on the UCI vector datasets (see Table \ref{tab:ranks:ucruci} and Figure \ref{fig:ucruci_pw}). These findings are in line with the observation of the first experiment and confirm that \asm ~fails to capture the variations in temporal dynamics. 

Although  \awp, \aep, and \aml ~essentially represent the same class of functions, their performances substantially differ. On the UCR time series datasets \aep ~performed best followed by \awp ~and \aml. On the UCI datasets, \aep ~and \aml ~performed best on comparable level, while \awp ~was ranked last by a large margin. These findings suggest that the warped-liner classifiers are better suited for time series classification than the max-linear classifier. One possible explanation for this phenomenon could be that max-linear classifiers only update a single hyperplane corresponding to the active component, whereas updating an active hyperplane of warped-product classifiers simultaneously updates weights of non-active hyperplanes by construction. This could be possibly advantageous for learning time-warp invariance. 

An explanation why the elastic-product classifier \aep ~performed comparable with the max-linear classifier \aml ~on UCI datasets, whereas the warped-product classifier \awp ~failed miserably could be as follows: By construction, \aep ~is sufficiently flexible to learn different hyperplanes that share only few weights, whereas massive weight sharing of \awp ~may result in poor classification performance of vector data. 

Finally, it should be noted that \awp ~is computationally demanding, because the length of the weight sequence is multiples longer than the length of the time series to be classified. Suppose that all time series are of length $l$ and let $e$ be the elasticity. Then the complexity of classifying a single test instance by \awp ~is $\mathcal{O}(el^2)$, whereas the complexity of the same task is $\mathcal{O}(el)$ for \aep ~is $\mathcal{O}(el)$. Overall, these and the above findings suggest to prefer \aep ~over \awp ~in time series classification.

\subsection{Label Dependency}

This section studies the problem of label dependency for elastic-product classifiers.

\subsubsection{Illustrative Examples}

\begin{figure}[t]
\centering
 \includegraphics[width=0.49\textwidth]{./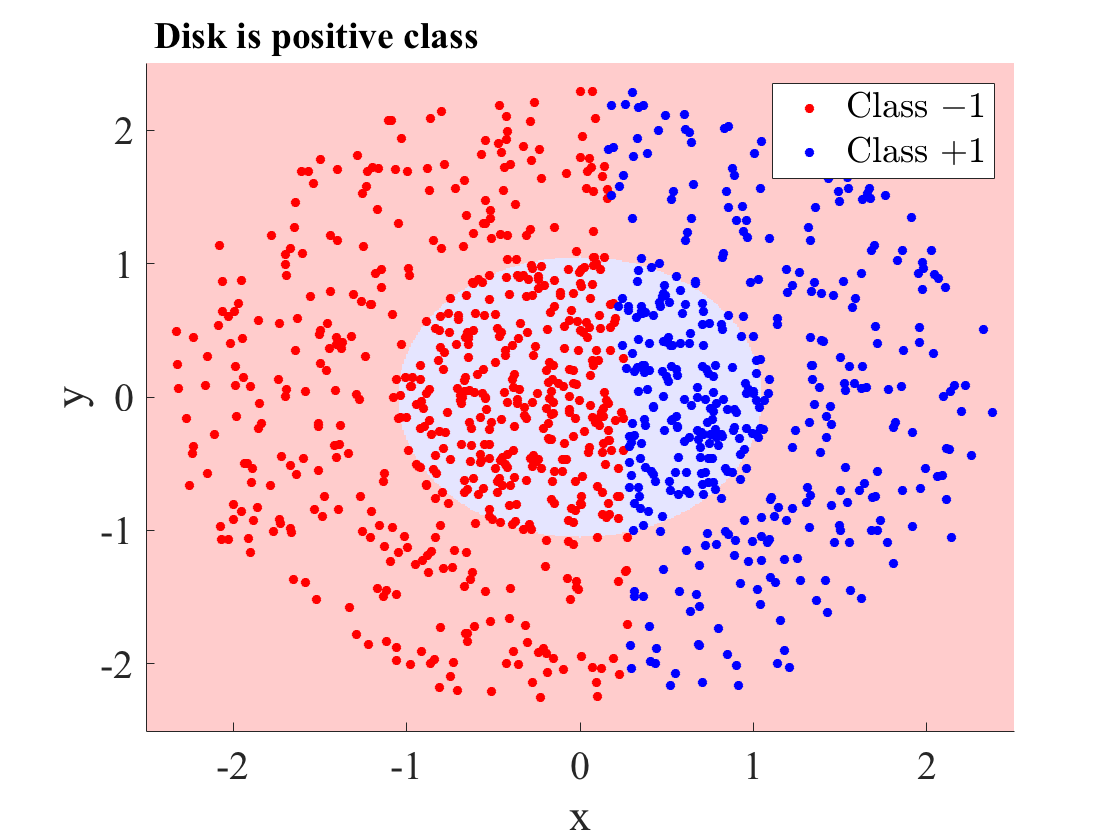}
 \hfill
 \includegraphics[width=0.49\textwidth]{./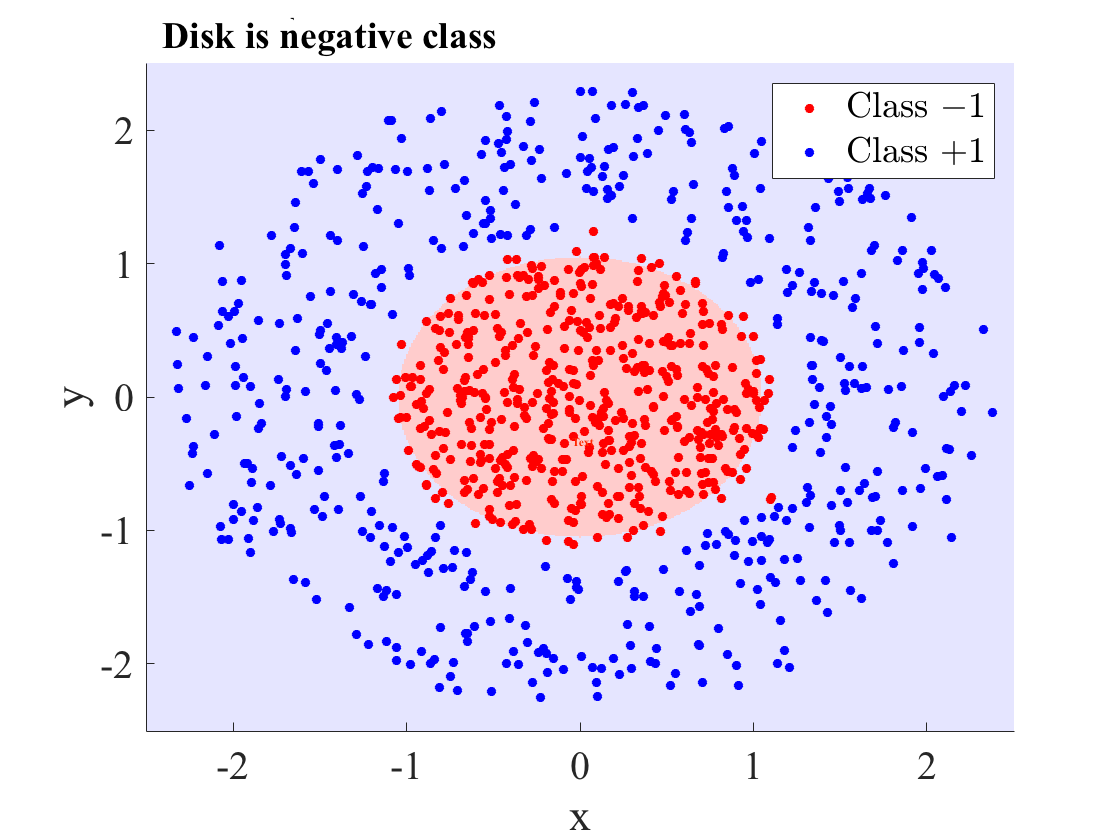}
\caption{Effect on choice of labeling function on classification error. The true class regions are the unit disk $\S{D}$ and its complement $\overline{\S{D}}$. A true class region is shaded red if negative and blue if positive. The left (right) plot assumes that the disk $\S{D}$ is the positive (negative) class region. The colored dots show the classification results of an elastic-product classifier $f$ with elasticity $e = 10$. A red (blue) dot represents a point assigned to the negative (positive) class by $f$.}
\label{fig:ex_circle} 
\end{figure}

\noindent
\emph{Two-Category Problem.\ }
To illustrate label dependency of elastic-product classifiers using a single discriminant function for two-category problems, we consider the problem of separating points inside a unit disk $\S{D}$ from points outside the disk $\overline{\S{D}}$. We randomly sampled $500$ points from each class region. Then we conducted two experiments: In the first (second) experiment, we regarded the disk as positive (negative) class region. In both experiments, we applied an elastic-product classifier with a single discriminant of elasticity $e = 10$. 

Figure \ref{fig:ex_circle} shows the results of both experiments. The plots of Figure \ref{fig:ex_circle} indicate that an elastic-product classifier with single discriminant function succeeds (fails) to broadly separate both classes if the negative class region is convex (non-convex). This result confirms the theoretical findings in Section \ref{subsec:separability}. As a solution to label dependency, Section \ref{subsec:separability} proposes to use $K$ discriminant functions if there are $K$ classes. For the disk classification problem, the results obtained by the elastic-product classifier with two discriminant functions are similar to the right plot of Figure \ref{fig:ex_circle} irrespective of how both classes are labeled. 

\medskip

\begin{figure}[h]
\centering
\includegraphics[width=0.49\textwidth]{./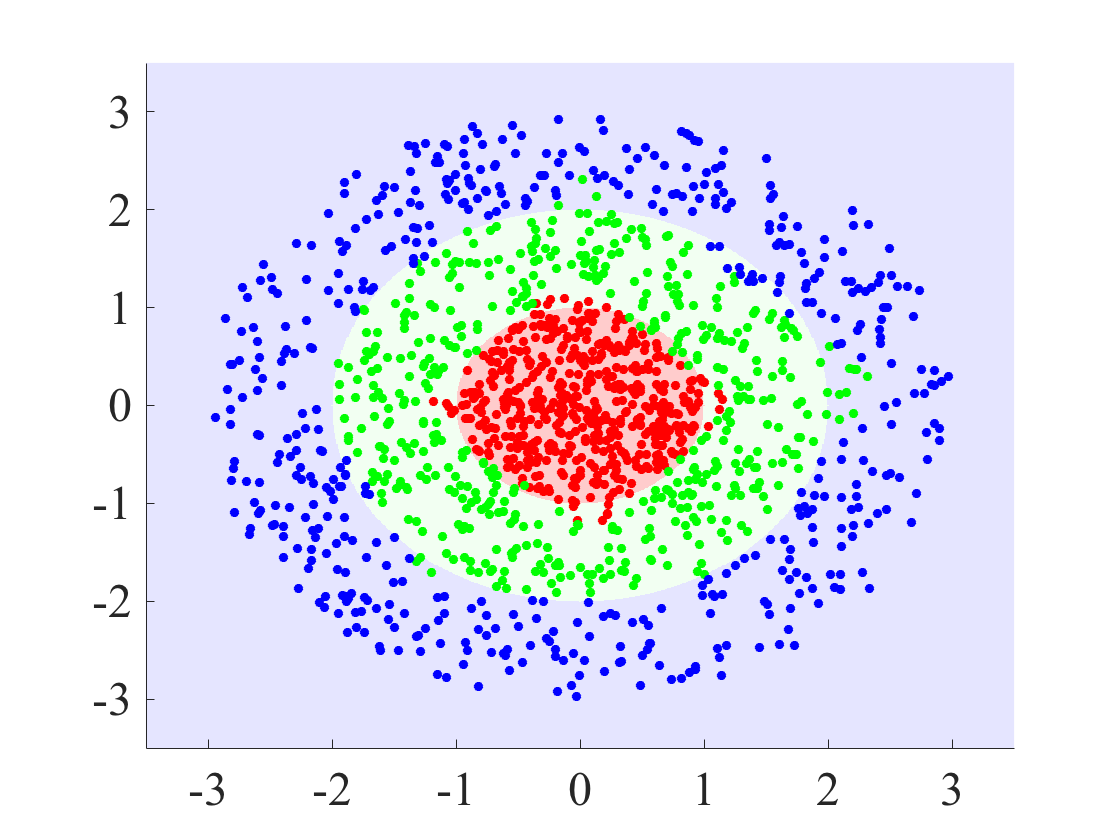}
\hfill
\includegraphics[width=0.49\textwidth]{./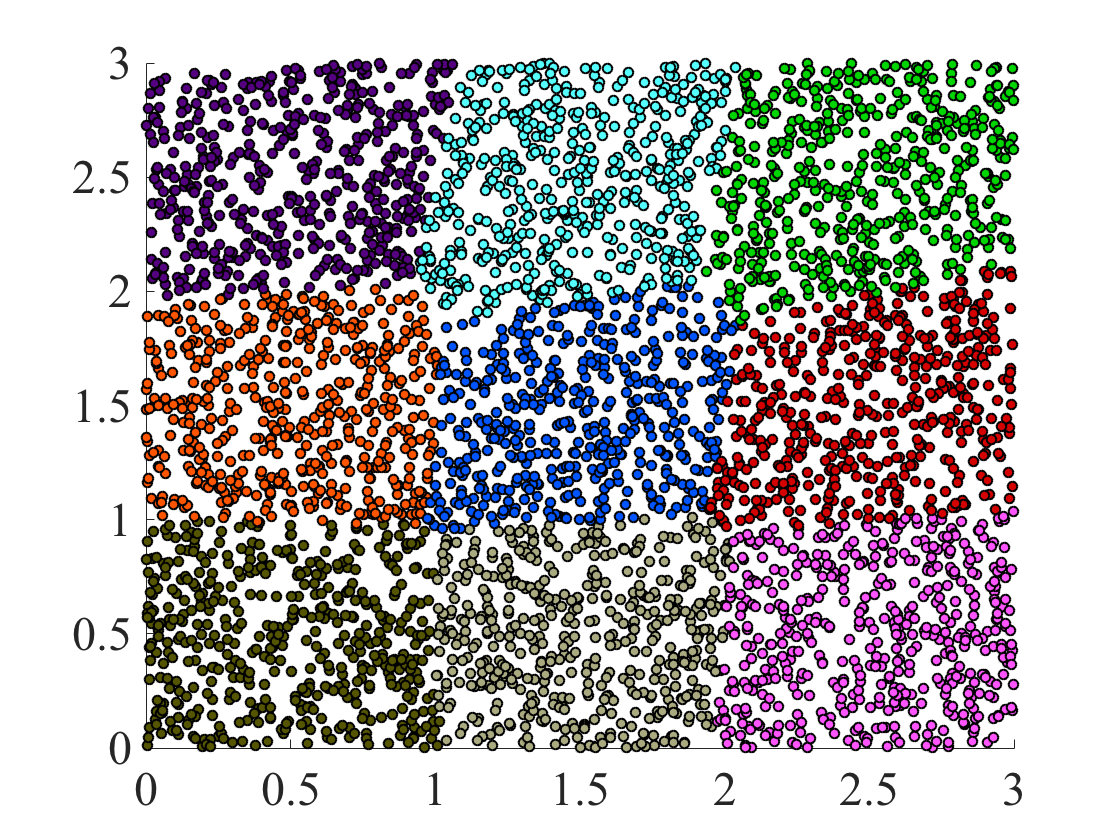}
\caption{Results of elastic-product classifiers with $K$ discriminant functions on two $K$-category problems. Left: The three class regions are a unit disk with center $(0,0)$ and radius $1$ (shaded red), a ring around the disk (shaded green), and the complement of both other class regions (shaded blue). Right: The nine class regions are unit squares arranged in a $3\times 3$ grid. Both: Data points are shown by dots. The color of a dot refers to the class label predicted by the respective elastic-product classifiers.}
\label{fig:ex_rings} 
\end{figure}

\noindent
\emph{Multi-Category Problems.\ }
We tested the elastic-product classifier with $K$ discriminant functions on a $3$- and on a $9$-category problem with multiple convex sets (see Figure \ref{fig:ex_rings}). As in the disk classification problem, we sampled $500$ points per class and applied the elastic-product classifier with elasticity $e = 10$. Figure \ref{fig:ex_rings} shows the results of both experiments. Both plots show that the elastic-product classifier broadly separates the multiple convex class regions.

\subsubsection{UCR Time Series}

In this experiment, we empirically studied label dependency using $15$ two-category problems from the UCR time series classification and clustering repository \cite{Chen2015}. We applied the following variants of elastic-product classifiers:
\begin{enumerate} 
\itemsep0em
\item \aep$_{\min}$: one discriminant function and `unfavourable' labeling of the training examples. 
\item \aep$_{\max}$: one discriminant function and `favourable' labeling of the training examples.
\item \aep$_{2\phantom{ax.}}$: two discriminant functions and randomly chosen labeling function.
\end{enumerate}
By favourable (unfavourable) labeling we mean the labeling of the training examples that resulted in a higher (lower) classification accuracy on the training set. All variants of \aep ~used elasticity $e = 5$. The initial learning rate was selected as in Algorithm \ref{alg:initial_lr}. The experimental protocol was holdout validation using the train-test set split provided by the UCR repository.

Table \ref{tab:ucr-label-dependency} shows the classification accuracies of the three classifiers. The overall results show that there are slight advances for \aep$_{\max}$ over \aep$_{\max}$. This finding indicates that using two discriminant solves the label dependency problem. 

\begin{table}[t]
\centering
\begin{tabular}{l@{\qquad}rrr@{\quad}c}
\hline
\hline
UCR dataset & \aep$_{\min}$ & \aep$_{\max}$ & \aep$_2\phantom{a}$ & \aep$_2$ $-$ \aep$_{\max}$\\
\hline
\\[-2ex]
BirdChicken  & 60.00  & 80.00  & 80.00 & $\sim$ \\
Coffee  & 89.29  & 96.43  & 100.00 & $+$ \\
DistalPhalanxOutlineAgeGroup  & 68.00  & 82.00  & 82.00 & $\sim$  \\
FordA  & 59.34  & 61.79  & 71.84 & $+$ \\
GunPoint  & 82.67  & 90.67  & 85.33 & $-$  \\
Ham  & 69.52  & 70.48  & 61.90  & $-$ \\
HandOutlines  & 83.90  & 85.70  & 86.50 & $\sim$  \\
ItalyPowerDemand  & 96.40  & 97.28  & 96.89 & $\sim$ \\
Lighting2  & 62.30  & 62.30  & 72.13 & $+$ \\
MoteStrain  & 82.59  & 86.42  & 84.42 & $-$ \\
ProximalPhalanxOutlineCorrect  & 85.57  & 86.94  & 85.22 & $-$ \\
SonyAIBORobotSurface  & 73.88  & 83.69  & 85.36  & $+$\\
TwoLeadECG  & 77.17  & 82.79  & 84.99  & $+$\\
Wafer  & 97.11  & 97.73  & 99.37  & $+$\\
Yoga  & 65.93  & 70.70  & 80.27  & $+$\\
\hline
\hline
\end{tabular}
\caption{Test classification accuracies in percentage of \aep$_{\min}$, \aep$_{\max}$, and \aep$_2$.The last column signifies whether \aep$_2$ performed better ($+$), comparable ($\sim$), or worse ($-$) than \aep$_{\max}$.}
\label{tab:ucr-label-dependency}
\end{table}

\section{Conclusion}
Warped-linear models are time-warp invariant analogues of linear models. Under mild assumptions, they are equivalent to polyhedral classifiers. This equivalence relationship is useful, because its simplifies analysis of warped-linear functions by reducing to max-linear functions. Both, analysis of the label dependency problem and derivation of the stochastic subgradient method exploited the proposed equivalence. Empirical results in time series classification suggest that elastic-product classifiers are an efficient and complementary alternative to nearest neighbor and prototype-based methods in DTW spaces. Inspired by linear models in Euclidean spaces, future work aims at analysis of warped-linear functions in DTW spaces and construction of advanced classifiers such as piecewise warped-linear classifiers and warped neural networks.

\small

\clearpage

\begin{appendix}
\section{Results}

\subsection{Table \ref{tab:results}: Classification Accuracy on UCR Data (Section \ref{subsec:exp01})}

\begin{table}[ht]
\centering
\begin{tabular}{lr@{\qquad}rrrr}
\hline
\hline
UCR dataset & nn\; & glvq & sm & ep\; \\
\hline
\\[-2ex]
Beef & 53.3 & 63.3 & 84.0 & 81.0 \\
CBF & 99.9 & 99.6 & 97.9 & 98.9 \\
ChlorineConcentration & 99.6 & 71.6 & 86.9 & 99.8 \\
Coffee & 100.0 & 98.2 & 96.7 & 100.0 \\
ECG200 & 83.5 & 80.5 & 86.0 & 88.4 \\
ECG5000 & 93.3 & 94.3 & 94.1 & 93.3 \\
ECGFiveDays & 99.2 & 99.1 & 99.6 & 99.7 \\
ElectricDevices & 79.2 & 78.2 & 53.9 & 74.7 \\
FaceFour & 92.9 & 92.0 & 91.7 & 92.3 \\
FacesUCR & 97.8 & 97.2 & 86.8 & 94.0 \\
FISH & 80.3 & 90.9 & 86.3 & 90.0 \\
GunPoint & 91.5 & 97.0 & 87.0 & 96.0 \\
Ham	& 72.4 & 74.8 & 81.8	& 84.9 \\
ItalyPowerDemand & 95.8 & 95.3 & 97.0 & 95.9 \\
Lighting2 & 89.3 & 76.0 & 61.6 & 65.8 \\
Lighting7 & 71.3 & 82.5 & 56.9 & 63.1 \\
MedicalImages & 80.7 & 71.7 & 63.9 & 73.2 \\
OliveOil & 85.0 & 85.0 & 52.7 & 81.0 \\
ProximalPhalanxOutlineAgeGroup & 75.5 & 83.6 & 82.2 & 83.6 \\
ProximalPhalanxOutlineCorrect & 82.0 & 85.1 & 78.9 & 86.3 \\
ProximalPhalanxTW & 77.5 & 81.2 & 81.0 & 82.2 \\
RefrigerationDevices & 60.7 & 62.9 & 37.5 & 47.9 \\
Strawberry & 96.5 & 94.1 & 96.1 & 97.9 \\
SwedishLeaf & 82.0 & 87.6 & 80.7 & 87.7 \\
synthetic control & 99.2 & 99.5 & 83.0 & 94.2 \\
ToeSegmentation1 & 85.8 & 92.5 & 52.6 & 73.8 \\
Trace & 100.0 & 100.0 & 70.0 & 89.0 \\
wafer & 99.4 & 96.5 & 94.0 & 99.8 \\
yoga & 93.9 & 73.2 & 69.5 & 93.2\\ 
\hline
\hline
\end{tabular}
\caption{Average accuracy in percentage of the four classifiers nn, glvq, sm, ep on $29$ UCR datasets obtained from $10$-fold cross validation.}
\label{tab:results}
\end{table}

\newpage 

\subsection{Table \ref{tab:results:ucruci}: Classification Accuracy on UCR and UCI Data (Section \ref{subsec:exp02})}

\begin{table}[h]
\centering
\begin{tabular}{l@{\quad}rrrr}
\hline
\hline
UCR dataset & sm & wp & ep & ml \\
\hline
\\[-2ex]
DistalPhalanxOutlineAgeGroup & 76.00 & 82.50 & 85.00 & 79.25 \\
ECG5000 & 92.27 & 93.29 & 93.40 & 92.69 \\
ElectricDevices & 45.92 & 68.67 & 56.05 & 42.17 \\
Gun Point & 80.67 & 96.67 & 85.33 & 84.00 \\
ItalyPowerDemand & 96.89 & 90.18 & 97.08 & 97.18 \\
MedicalImages & 55.39 & 66.84 & 64.21 & 63.82 \\
MiddlePhalanxTW & 61.40 & 59.90 & 61.40 & 61.65 \\
Plane & 95.24 & 96.19 & 96.19 & 95.24 \\
ProximalPhalanxOutlineAgeGroup & 82.44 & 83.90 & 85.37 & 84.88 \\
ProximalPhalanxOutlineCorrect & 84.19 & 82.13 & 88.32 & 86.94 \\
ProximalPhalanxTW & 78.75 & 80.50 & 78.75 & 78.75 \\
SonyAIBORobotSurface & 76.54 & 87.85 & 83.53 & 70.05 \\
SwedishLeaf & 79.20 & 86.08 & 81.12 & 78.56 \\
synthetic control & 80.00 & 98.33 & 92.33 & 86.33 \\
TwoLeadECG & 93.94 & 80.60 & 93.94 & 93.94 \\
\hline
\hline
\\
\hline
\hline
UCI dataset & sm & wp & ep & ml \\
\hline
\\[-2ex]
balance & 90.87 & 80.77 & 92.31 & 91.83 \\
banknote & 98.69 & 96.94 & 100.00 & 100.00 \\
ecoli & 81.58 & 67.54 & 75.44 & 73.68 \\
eye & 60.42 & 68.54 & 82.36 & 88.68 \\
glass & 61.97 & 59.15 & 63.38 & 64.79 \\
ionosphere & 87.18 & 84.62 & 89.74 & 88.03 \\
iris & 94.12 & 88.24 & 92.16 & 94.12 \\
occupancy & 99.01 & 97.26 & 98.96 & 98.85 \\
pima & 77.73 & 67.58 & 67.19 & 71.48 \\
sonar & 71.01 & 68.12 & 88.41 & 84.06 \\
whitewine & 53.52 & 49.54 & 53.83 & 53.34 \\
yeast & 58.59 & 46.87 & 57.37 & 54.55 \\
\hline
\hline
\end{tabular}
\caption{Classification accuracy in percentage of wp, ep, and ml on selected UCR and UCI datasets.}
\label{tab:results:ucruci}
\end{table}

\clearpage
\section{Performance Measures}\label{sec:performance-measures}

This section describes the pairwise winning percentage and pairwise mean percentage difference. 

\subsection{Winning Percentage}

The pairwise winning percentages are summarized in a matrix $W = (w_{ij})$. The winning percentage $w_{ij}$ is the fraction of datasets for which the accuracy of the classifier in row $i$ is strictly higher than the accuracy of the classifier in column $j$. Formally, the winning percentage $w_{ij}$ is defined by
\[
w_{ij} = 100 \cdot \frac{\abs{\cbrace{d \in \S{D} \,:\, \text{acc}_d(j) < \text{acc}_d(i)}}}{\abs{\S{D}}}
\]
where $\text{acc}_d(i)$ is the accuracy of the classifier in row $i$ on dataset $d$, and $\text{acc}_d(j)$ is the accuracy of the classifier in column $j$ on $d$. The percentage $w_{ij}^{eq}$ of ties between classifiers $i$ and $j$ can be inferred by 
\[
w_{ij}^{eq} = 100-w_{ij}-w_{ji}.
\]

\subsection{Pairwise Mean Percentage Difference}
The pairwise mean percentage differences are summarized in a matrix $A = (a_{ij})$. The mean percentage difference $a_{ij}$ between the classifier in row $i$ and the classifier in column $j$ is defined by
\[
a_{ij} = 100 \cdot\frac{2}{\abs{\S{D}}}\sum_{d \in \S{D}} \cdot \frac{\text{acc}_d(i)-\text{acc}_d(j)}{\text{acc}_d(i) + \text{acc}_d(j)},
\]
Positive (negative) values $a_{ij}$ mean that the average accuracy of the row classifier was higher (lower) on average than the average accuracy of the column classifier.

\section{Proofs}

This section presents the proofs of the theoretical results.

\subsection{Notations}
Throughout this section, we use the following notations:

\bigskip

\begin{tabular}{l@{\quad:\quad}l}
\multicolumn{2}{l}{\textbf{Notations}}\\
$[n]$   & $\cbrace{1, 2, \ldots, n}$, where $n \in \N$\\
$u_n \in \R^n$ & unit vector $(1, \ldots, 1) \in \R^n$ of all ones\\
$\inner{A, B}$ & Frobenius inner product between matrices $A$ and $B$\\
$A \circ B$ & Hadamard product between  matrices $A$ and $B$\\
$A \times B$ & Kronecker product between  matrices $A$ and $B$
%$U \in \R^{n \times n}$ & unit matrix of all ones\\
%$I \in \R^{n \times n}$ & identity matrix
\end{tabular}

\medskip

The different matrix products are defined as follows: Let  $A = (a_{ij})$ and $B = (b_{ij})$ be two matrices from $\R^{m \times n}$. The (Frobenius) inner product between $A$ and $B$ is defined by
\[
\inner{A, B} = \sum_{i=1}^m \sum_{j=1}^n a_{ij}b_{ij}.
\]
The Hadamard product $A \circ B$ of $A$ and $B$ is a matrix $C = c_{ij}$ from $\R^{m \times n}$ with elements $c_{ij} = a_{ij}b_{ij}$. The Kronecker product between matrices $A \in \R^{m \times n}$ and $B \in \R^{r \times s}$ is the ($mr \times ns$) block matrix
\[
A \times B = \begin{pmatrix}
a_{11} B & \cdots & a_{1n} B\\
\vdots & \ddots & \vdots\\
a_{m1} B & \cdots & a_{mn} B
\end{pmatrix}.
\]

\subsection{Equivalent Representations}

This section derives equivalent representations of max-linear and warped-linear functions. 

\subsubsection{Max-Linear Functions}

The following result reduces generalized linear functions to ordinary linear functions. 
\begin{lemma}\label{lemma:w'phi(x) = w_p'x}
Let $w \in \R^m$ and let $\phi: \R^d \rightarrow \R^m$ be a linear transformation. Then there is a $\hat{w} \in \R^d$ such that 
\[
w\T \phi(x) = \hat{w}\T x
\]
for all $x \in \R^d$. 
\end{lemma}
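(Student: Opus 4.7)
The statement is a standard linear algebra fact, so the plan is extremely short: represent the linear map $\phi$ by a matrix and then move it onto the weight vector.

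First, I would invoke the fact that every linear transformation $\phi:\R^d \rightarrow \R^m$ is represented by a unique matrix $M \in \R^{m \times d}$ such that $\phi(x) = M x$ for all $x \in \R^d$. I would construct $M$ explicitly by taking its $j$-th column to be $\phi(e_j)$, where $e_j$ is the $j$-th standard basis vector; linearity of $\phi$ then yields $\phi(x) = Mx$ coordinatewise.

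Next, I would perform the transpose manipulation
\[
w\T \phi(x) \;=\; w\T M x \;=\; (M\T w)\T x,
\]
and simply define $\hat{w} = M\T w \in \R^d$. This gives the claimed identity $w\T \phi(x) = \hat{w}\T x$ for every $x \in \R^d$.

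There is no genuine obstacle here; the only thing worth noting is that $\hat{w}$ depends on both $w$ and on the matrix representation of $\phi$, but not on $x$, which is exactly what the lemma asserts. The role of this lemma in the paper is evidently to justify rewriting the generalized linear components $f_p(x) = w_p\T\phi_p(x)$ of a max-linear function as ordinary linear functions $\hat{w}_p\T x$, so I would keep the proof at one or two lines and move on.
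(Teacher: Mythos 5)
Your proof is correct and matches the paper's argument exactly: both represent $\phi$ by its matrix $A$ (your $M$), rewrite $w\T\phi(x) = w\T A x = (A\T w)\T x$, and set $\hat{w} = A\T w$. The explicit column-by-column construction of the matrix is a harmless elaboration the paper omits.
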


\begin{proof}
Since $\phi$ is a linear map, we can find a matrix $A \in \R^{m \times d}$ such that $\phi(x) = Ax$ for all $x \in \R^d$. Then we have
\[
w \T \phi(x) = w \T A x = \args{A\T w}{\T}x.
\] 
Setting $\hat{w} = A\T w$ completes the proof.
\end{proof}

\medskip

From Lemma \ref{lemma:w'phi(x) = w_p'x} follows that a max-linear function $f$ is of the equivalent form 
\[
f(x) = \max \cbrace{f_p(x) = \hat{w}_p \T x \,:\, p \in \S{P}},
\]
where $\hat{w}_p \in \R^d$ is the augmented weight vector of the $p$-th component function $f_p$. 

\subsubsection{Warped-Product Functions}

We present two equivalent definitions of warped-product functions. The first definition is based on rewriting the score $w \ast_p x$ as an inner product in the weight space. The second definition is based on rewriting $w \ast x$ as an inner product in the input space. 

\medskip

We begin with some preliminary work. Let $e^{k} \in \R^n$ denotes the $k$-th standard basis vector with elements
\[
e_i^k = \begin{cases}
1 & i = k\\
0 & i \neq k
\end{cases}
\] 
for all $i \in [n]$. In the following, the dimension of the standard basis vectors can be inferred from the context. 

\begin{definition}
Let $p = (p_1, \dots, p_L) \in \S{P}_{e, d}$ be a warping path with $L$ points $p_l = (i_l, j_l)$. Then 
\begin{equation*}
\Phi = \argsS{e^{i_1}, \ldots, e^{i_L}}{\tran} \in \R^{L \times e} ,\quad
\Psi = \argsS{e^{j_1}, \ldots, e^{j_L}}{\tran} \in \R^{L \times d}
\end{equation*}
is the pair of \emph{embedding matrices} induced by warping path $p$. 
\end{definition}
The $L$ rows of both embedding matrices are standard basis vectors from $\R^e$ and $\R^d$, respectively. The embedding matrices have full column rank due to the boundary and step condition of a warping path. Thus, we can regard the embedding matrices of warping path $p$ as injective linear maps $\Phi:\R^e \rightarrow \R^L$ and $\Psi:\R^d \rightarrow \R^L$ that embed every $w \in \R^e$ and every $x \in \R^d$ into $\R^L$ by matrix multiplications $\Phi w$ and $\Psi x$. 

\begin{definition}
The \emph{warping matrix} of warping path $p \in \S{P}_{e,d}$ is an ($e \times d$)-matrix $M_p = \args{m_{ij}^p}$ with elements
\begin{equation*}
m_{ij}^p = \begin{cases} 
1 & (i,j) \in p \\ 
0 & \text{otherwise} 
\end{cases}.
\end{equation*}
\end{definition}

\medskip

The next result relates the warping matrix of a warping path to the product of its embedding matrices. 
\begin{lemma}\label{lem:properties:embeddings}
Let $\Phi$ and $\Psi$ be the embedding matrices induced by warping path $p\in \S{P}_{e,d}$. Then the warping matrix $M_p\in \R^{e \times d}$ of warping path $p$ is of the form $M_p = \Phi \tran \Psi$.
\end{lemma}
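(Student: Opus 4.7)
The plan is to verify the identity by direct computation of the $(i,j)$-entry of $\Phi^{\mkern-1.5mu\mathsf{T}} \Psi$ and match it against the definition of $m_{ij}^p$.

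First I would unpack the product. Since the columns of $\Phi^{\mkern-1.5mu\mathsf{T}}$ are the standard basis vectors $e^{i_1}, \dots, e^{i_L} \in \R^e$ and the rows of $\Psi$ are $(e^{j_1})^{\mkern-1.5mu\mathsf{T}}, \dots, (e^{j_L})^{\mkern-1.5mu\mathsf{T}} \in \R^d$, I compute
\[
\bigl(\Phi^{\mkern-1.5mu\mathsf{T}} \Psi\bigr)_{ij} \;=\; \sum_{l=1}^{L} \bigl(e^{i_l}\bigr)_i \, \bigl(e^{j_l}\bigr)_j \;=\; \sum_{l=1}^{L} \mathbb{I}_{\{i_l = i\}} \,\mathbb{I}_{\{j_l = j\}} \;=\; \bigl|\{\,l \in [L] \,:\, p_l = (i,j)\,\}\bigr|.
\]
So the entry equals the number of times the point $(i,j)$ is visited by the path $p$.

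Next I would argue that a warping path visits each lattice point \emph{at most once}. This is immediate from the step condition: for every $l$, $p_{l+1} - p_l \in \{(1,0),(0,1),(1,1)\}$, so both coordinates are non-decreasing along $p$ and at least one coordinate strictly increases at each step. Consequently the map $l \mapsto p_l$ is injective, so the cardinality above is either $0$ or $1$.

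Combining the two observations, $(\Phi^{\mkern-1.5mu\mathsf{T}} \Psi)_{ij} = 1$ if $(i,j) \in p$ and $0$ otherwise, which is exactly $m_{ij}^p$. Hence $\Phi^{\mkern-1.5mu\mathsf{T}} \Psi = M_p$. The main (and only) subtle point is the injectivity of $l \mapsto p_l$, but this follows directly from the admissible step set, so there is no real obstacle—the argument is a one-line index computation plus this observation.
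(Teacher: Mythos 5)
Your proof is correct and complete. The paper itself does not give an argument for this lemma; it simply cites Lemma A.3 of Schultz and Jain (2018), so there is no internal proof to compare against. Your direct computation is exactly the standard argument one would expect that cited lemma to contain: the entry $(\Phi\tran\Psi)_{ij}$ counts the indices $l$ with $p_l = (i,j)$, and the step condition forces $i_l + j_l$ to be strictly increasing in $l$, so each lattice point is visited at most once and the count is the indicator of $(i,j) \in p$. You correctly identify the injectivity of $l \mapsto p_l$ as the only point needing justification, and your justification via the admissible step set is sound. The proposal is a self-contained replacement for the paper's external citation.
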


\begin{proof}
\cite{Schultz2017}, Lemma A.3.
\end{proof}

\medskip

Now we are in the position to express the score $w \ast_p x$ as an inner product in the weight and in the input space. 
\begin{proposition}\label{prop:x*y=x'Wy}
Let $M_p$ be the warping matrix of warping path $p \in \S{P}_{e,d}$. Then we have
\[
w \ast_p x = w\T\args{M_p\phantom{\T}\!\! x} = \argsS{M_p\T w}{\T} x
\]
for all $w \in \R^e$ and all $x \in \R^d$.
\end{proposition}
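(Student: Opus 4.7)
The plan is to verify the first equality by direct expansion using the definition of the warping matrix, and then obtain the second equality from the standard identity $u\T(Av) = (A\T u)\T v$ for matrices.

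First, I would expand $w\T(M_p x)$ by writing out the matrix-vector products componentwise. By definition of $M_p$, the $(i,j)$-entry $m_{ij}^p$ is $1$ precisely when $(i,j) \in p$ and $0$ otherwise. Hence
\[
w\T(M_p x) = \sum_{i=1}^e \sum_{j=1}^d w_i\, m_{ij}^p\, x_j = \sum_{(i,j)\in p} w_i x_j,
\]
which matches the definition $w \ast_p x = \sum_{(i,j)\in p} w_i x_j$. This establishes the first equality.

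The second equality is then immediate: for any matrix $A \in \R^{e\times d}$ and vectors $u\in\R^e$, $v\in\R^d$, one has $u\T(Av) = (A\T u)\T v$, so in particular $w\T(M_p x) = (M_p\T w)\T x$. As a sanity check, one could instead route the argument through Lemma \ref{lem:properties:embeddings}, writing $M_p = \Phi\T\Psi$ so that $w\T M_p x = (\Phi w)\T(\Psi x)$, and then observing that $\Phi w = (w_{i_1},\ldots,w_{i_L})\T$ and $\Psi x = (x_{j_1},\ldots,x_{j_L})\T$ where $p = (p_1,\ldots,p_L)$ with $p_l = (i_l, j_l)$; their inner product reproduces $\sum_{l=1}^L w_{i_l} x_{j_l} = w \ast_p x$.

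There is no real obstacle here; the statement is essentially a bookkeeping identity reconciling the combinatorial sum along $p$ with a matrix-form bilinear expression. The only potential pitfall is an indexing slip between $\R^{e\times d}$ and $\R^{d\times e}$ conventions for $M_p$, so I would be careful to keep the dimensions consistent with the statement (which places $w$ on the left and $x$ on the right, so $M_p \in \R^{e\times d}$ is the right shape).
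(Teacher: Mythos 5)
Your proof is correct, and your primary argument takes a genuinely more elementary route than the paper's. You obtain the first equality by expanding $w\T(M_p\, x)$ componentwise, so that the only input is the definition of the warping matrix: summing $w_i\, m_{ij}^p\, x_j$ over all $(i,j)$ retains exactly the terms with $(i,j) \in p$ (each point of $p$ occurring once, by the step condition), which is $w \ast_p x$ by definition. The paper instead factors $M_p = \Phi\T \Psi$ through the embedding matrices of $p$ (Lemma \ref{lem:properties:embeddings}, cited from prior work), computes $\Phi w = (w_{i_1},\ldots,w_{i_L})\T$ and $\Psi x = (x_{j_1},\ldots,x_{j_L})\T$, and identifies $w \ast_p x$ with the inner product $(\Phi w)\T(\Psi x)$ --- precisely the alternative you sketch as a sanity check. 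Your direct expansion is shorter and self-contained; what the paper's detour buys is a reusable structural factorization of $M_p$ rather than anything needed for this particular identity. Both treatments dispose of the second equality in the same way, via the standard identity $u\T(Av) = (A\T u)\T v$, and your attention to the $\R^{e\times d}$ orientation of $M_p$ matches the paper's convention.
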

\begin{proof}
It is sufficient to show that $w \ast_p x = w\T\args{M_p\phantom{\T}\!\! x}$. We assume that the warping path $p$ is given by $p = \args{p_1, \ldots, p_L}$ with points $p_l = \args{i_l, j_l}$. Consider the embedding matrices $\Phi = (\phi_{lk})$ and $\Psi = (\psi_{lk})$. The rows of $\Phi$ and $\Psi$ are standard basis vectors in $\R^e$ and $\R^d$, respectively. Then the elements of $\Phi$ and $\Psi$ are given by $\phi_{lk} = e^{i_l}_k$ and $\psi_{lk} = e^{j_l}_k$. We set $w' = \Phi w$ and $x' = \Psi x$. The elements of $w'$ and $x'$ are of the form
\begin{align*}
w'_l = \sum_{k=1}^e \phi_{lk} w_k = \sum_{k=1}^e e^{i_l}_k w_k = w_{i_l}\\
x'_l = \sum_{k=1}^d \psi_{lk} x_k = \sum_{k=1}^d e^{j_l}_k x_k = x_{j_l}
\end{align*}
for all $l \in [L]$.  From the definition of $w'$ and $x'$ together with Lemma \ref{lem:properties:embeddings} follows that
\[
w \ast_p x 
= \sum_{(i,j) \in p} w_i x_j 
= \sum_{l=1}^L w_{i_l} x_{j_l} 
= \sum_{l=1}^L w'_l x'_l 
= \args{\Phi w}{\T}\args{\Psi x}
= w\T\Phi\T\Psi x 
= w\T\args{M_p x}.
\]
This proves the assertion.
\end{proof}

\medskip

Suppose that $\S{Q} \subseteq \S{P}_{e,d}$ is a subset. From Prop.~\ref{prop:x*y=x'Wy} follows that a warped-product function $f$ can be equivalently written as  
\begin{align*}
f(x) 
&= \max \cbrace{w \T x_p \,:\, p \in \S{Q}}\\
&= \max \cbrace{w_p \T x \,:\, p \in \S{Q}},
\end{align*}
where $x_p = M_p x \in \R^e$ and $w_p = M_p\T w \in \R^d$ for all $p \in \S{Q}$.

\subsubsection{Elastic-Product Functions}

This section presents two equivalent definitions of elastic-product functions. As in the previous section, we show that both definitions are based on inner products in the weight and input space. 

\medskip
 
To express scores $W \otimes_p x$ as inner products in the weight space, we introduce the $p$-matrix of $x$. 
\begin{definition}
Let $p \in \S{P}_{d,e}$ be a warping path. The \emph{$p$-matrix} of $x \in \R^d$ is a ($d \times e$)-matrix $X_p = \args{x_{ij}^p}$ with elements
\[
x_{ij}^p = \begin{cases}
x_i & (i,j) \in p\\
0 & \text{otherwise}
\end{cases}.
\]
\end{definition}

\medskip

We obtain the $p$-matrix $X_p$ by embedding $x$ into a ($d \times e$)-dimensional zero-matrix along warping path $p$. We show that the transition from $x$ to $X_p$ is a linear map. 
\begin{lemma}\label{lemma:x->Xp}
The map 
\[
\phi: \R^d \rightarrow \R^{d \times e}, \quad x \mapsto M_p \circ \args{x \times u_e\T},
\]
is linear and satisfies $X_p = \phi(x)$
\end{lemma}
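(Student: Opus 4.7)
The plan is to verify both assertions by a direct element-wise computation, since neither claim requires any non-trivial structural insight beyond the definitions of the Kronecker and Hadamard products.

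First I would compute the $(i,j)$-entry of the $d \times e$ matrix $x \times u_e\T$. Since $x \in \R^d$ is regarded as a column ($d \times 1$) and $u_e\T$ as a row ($1 \times e$), the Kronecker product specializes to the outer product: every entry in row $i$ equals $x_i$, i.e.\ $\bigl(x \times u_e\T\bigr)_{ij} = x_i$ for all $i \in [d]$ and $j \in [e]$. Taking the Hadamard product with the warping matrix $M_p$ then yields
\[
\bigl(M_p \circ (x \times u_e\T)\bigr)_{ij} = m_{ij}^p \cdot x_i = \begin{cases} x_i & (i,j) \in p \\ 0 & \text{otherwise} \end{cases},
\]
which coincides with the definition of $x_{ij}^p$. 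Hence $\phi(x) = X_p$.

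Second, I would establish linearity. The map $x \mapsto x \times u_e\T$ is linear in $x$ (the Kronecker product is bilinear, and here $u_e\T$ is fixed), so for $\alpha, \beta \in \R$ and $x, y \in \R^d$,
\[
(\alpha x + \beta y) \times u_e\T = \alpha\, (x \times u_e\T) + \beta\, (y \times u_e\T).
\]
The Hadamard product distributes over matrix addition and commutes with scalar multiplication in each argument, so applying $M_p \circ (\,\cdot\,)$ on the left gives $\phi(\alpha x + \beta y) = \alpha\, \phi(x) + \beta\, \phi(y)$.

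No step poses a real obstacle; the only care needed is in confirming that the particular Kronecker product $x \times u_e\T$ behaves as the outer product in this degenerate shape, which follows directly from the block-matrix definition given in the notations. The result is therefore a short chain of definitional rewritings.
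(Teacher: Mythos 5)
Your proposal is correct and follows essentially the same route as the paper: an element-wise computation showing $\bigl(x \times u_e\T\bigr)_{ij} = x_i$ and hence $\bigl(M_p \circ (x \times u_e\T)\bigr)_{ij} = m_{ij}^p x_i = x_{ij}^p$, combined with the observation that the Kronecker product with a fixed vector and the Hadamard product with a fixed matrix are linear operations. Your treatment of linearity is in fact slightly more careful than the paper's one-line remark, but the substance is the same.
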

\begin{proof}
The Kronecker product and Hadamard product are linear maps. Since the linear maps are closed under composition, the map $\phi$ is linear. 

It remains to show that $X_p = \phi(x)$. The Kronecker product $X' = x \times u_e\T$ is a ($d \times e$)-matrix $X' = (x'_{ij})$ with elements $x_{ij}' = x_i$. The Schur product $X = M_p \circ X'$ is a ($d \times e$)-matrix $X = (x_{ij})$ with elements 
\[
x_{ij} = m_{ij}^p x'_{ij} = m_{ij}^p x_i.
\]
From the properties of the warping matrix $M_p$ follows that $x_{ij} = x_i$ if $(i,j) \in p$ and $x_{ij} = 0$ otherwise. Thus, we have $X_p = X$ and the proof is complete. 
\end{proof}

\medskip

To express scores $W \otimes_p x$ as inner products in the input space, we introduce the $p$-projection of $W$. 
\begin{definition}
Let $W \in \R^{d \times e}$ be a weight matrix and let $p \in \S{P}_{d,e}$ be a warping path. The \emph{$p$-projection} of $W$ is a vector $w_p \in \R^d$ defined by
\[
w_p = \args{M_p \circ W} u_e,
\]
where $M_p$ is the warping matrix of $p$ and $u_e \in \R^e$ is the vector of all ones. 
\end{definition}

\medskip

The next result shows that the scores $W \otimes_p x$ can be written as inner products in the weight and input space.

\begin{proposition}\label{prop:Wox = <W,Xp> = wp'x}
Let $W \in \R^{d \times e}$ be a matrix, let $x \in \R^d$ be a vector, and let $p \in \S{P}_{d,e}$ be a warping path. Then 
\[
W \otimes_p x = \inner{W, X_p} = w_p\T x,
\]
where $X_p$ is the $p$-matrix of $x$ and $w_p = \psi_p(W)$ is the $p$-projection of $W$.
\end{proposition}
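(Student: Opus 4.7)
The plan is to verify both equalities by direct expansion from the definitions, since each of the three quantities reduces to the same sum $\sum_{(i,j)\in p} w_{ij} x_i$.

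First I would establish $W \otimes_p x = \inner{W, X_p}$. Expanding the Frobenius inner product gives
\[
\inner{W, X_p} = \sum_{i=1}^d \sum_{j=1}^e w_{ij}\, x_{ij}^p.
\]
By definition of the $p$-matrix, $x_{ij}^p = x_i$ when $(i,j) \in p$ and $x_{ij}^p = 0$ otherwise, so the double sum collapses to $\sum_{(i,j)\in p} w_{ij} x_i$, which is exactly $W \otimes_p x$.

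Next I would show $\inner{W, X_p} = w_p\T x$. The $i$-th component of the $p$-projection satisfies
\[
(w_p)_i = \bigl((M_p \circ W) u_e\bigr)_i = \sum_{j=1}^e m_{ij}^p\, w_{ij},
\]
since multiplication by $u_e$ sums over the columns. Therefore
\[
w_p\T x = \sum_{i=1}^d (w_p)_i\, x_i = \sum_{i=1}^d \sum_{j=1}^e m_{ij}^p\, w_{ij}\, x_i.
\]
Using $m_{ij}^p = 1$ iff $(i,j) \in p$, this again reduces to $\sum_{(i,j)\in p} w_{ij} x_i$, matching $W\otimes_p x$ and $\inner{W, X_p}$.

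There is no genuine obstacle: the proposition is a straightforward bookkeeping identity built on the definitions of $X_p$, $M_p$, and $w_p$. The only thing worth flagging is being careful about which index ranges over rows versus columns of $M_p$ and $W$, so that the Hadamard product $M_p \circ W$ is well-defined and the contraction by $u_e$ produces a vector in $\R^d$ matching the dimension of $x$.
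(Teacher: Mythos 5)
Your proposal is correct and follows essentially the same route as the paper: both equalities are verified by direct expansion, first collapsing the Frobenius inner product via the definition of the $p$-matrix, and then computing the components of the $p$-projection as $\sum_{j=1}^e m_{ij}^p w_{ij}$ to reduce $w_p\T x$ to the same sum $\sum_{(i,j)\in p} w_{ij} x_i$. No gaps; the paper's proof is the same bookkeeping argument.
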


\begin{proof}
It is sufficient to show the assertions $W \otimes_p x = \inner{W, X_p}$ and $W \otimes_p x = w_p\T x$. The first assertion $W \otimes_p x = \inner{W, X_p}$ follows from
\begin{align*}
\inner{W, X_p} 
&= \sum_{i=1}^d \sum_{j=1}^e w_{ij}^{\phantom{p}}x_{ij}^p
= \sum_{(i,j) \in p} w_{ij}x_i = W \otimes_p x.
\end{align*}

We show the second assertion $W \otimes_p x = w_p\T x$. Let $M_p = \args{m_{ij}^p}$ be the warping matrix of path $p$. For every $i \in [d]$ let $m_i^p$ and $w_i$ denote the $i$-th row of $M_p$ and $W$, respectively. Then the elements $w_i^p$ of the $p$-projection $w_p = \args{w_1^p, \ldots, w_d^p}$ are of the form
\[
w_i^p = (m_i^p \circ w_i)\T u_e = \sum_{j=1}^e m_{ij}^p w_{ij}.
\]
Then the second assertion follows from
\[
W \otimes_p x = \sum_{(i,j) \in p} w_{ij}x_i = \sum_{i=1}^d \sum_{j=1}^e m_{ij}^p w_{ij}x_i = \sum_{i=1}^d w_i^p x_i = w_p\T x.
\]
\end{proof}

\medskip

Suppose that $\S{Q} \subseteq \S{P}_{d, e}$ is a subset. From Prop.~\ref{prop:Wox = <W,Xp> = wp'x} follows that an elastic-product function $f$ can be equivalently written as  
\begin{align*}
f(x) 
&= \max \cbrace{\inner{W, X_p} \,:\, p \in \S{Q}}\\
&= \max \cbrace{w_p \T x \,:\, p \in \S{Q}},
\end{align*}
where $X_p$ is the $p$-matrix of $x$ and $w_p$ is the $p$-projection of $W$ for all $p \in \S{Q}$.

\subsection{Proof of Theorem \ref{theorem:warped=max-linear}}

\subsubsection*{Proof of $\S{W}_d \subseteq \S{L}_d$}
Let $f \in \S{W}_d$ be a warped-product function of elasticity $e$ in $\S{Q} \subseteq \S{P}_{e,d}$. Then there is a weight sequence $w\in \R^e$ such that $f$ is of the form 
\[
f(x) = \max \cbrace{ w \ast_p x\,: \, p \in \S{Q}}.
\]
From Prop.~\ref{prop:x*y=x'Wy} follows that 
\[
w \ast_p x = w\T M_p\,x, 
\]
where $M_p$ is the warping matrix of $p$. Consider the linear transformation $\phi_p(x) = M_p\, x$. Then we can equivalently express $f$ by
\[
f(x) = \max \cbrace{ f_p(x)\,: \, p \in \S{Q}},
\]
where the components $f_p(x) = w \T \phi_p(x)$ are generalized linear functions indexed by $p \in \S{Q}$. This shows that $f \in \S{L}_d$ and therefore $\S{W}_d \subseteq \S{L}_d$. 
\qed

\subsubsection*{Proof of $\S{E}_d \subseteq \S{L}_d$}
Let $f \in \S{E}_d$ be a warped-product function of elasticity $e$ in $\S{Q} \subseteq \S{P}_{d, e}$. Then there is a weight matrix $W\in \R^{d \times e}$ such that $f$ is of the form 
\[
f(x) = \max \cbrace{ W \otimes_p x\,: \, p \in \S{Q}}.
\]
Let $u_e = (1, \ldots, 1) \in \R^e$ be the vector of all ones. The map $\phi_p(x)$ defined in Lemma \ref{lemma:x->Xp} is linear. From  Prop.~\ref{prop:Wox = <W,Xp> = wp'x} follows that $f$ can be equivalently written as 
\[
f(x) = \max \cbrace{ f_p(x)\,: \, p \in \S{Q}},
\]
where the components $f_p(x) = \inner{W, \phi_p(x)}$ are generalized linear functions indexed by $p \in \S{Q}$. This shows that $f \in \S{L}_d$ and therefore $\S{E}_d \subseteq \S{L}_d$. 
\qed

\subsubsection*{Proof of $\S{L}_d \subseteq \S{E}_d$}

Suppose that the index set is given by $\S{P} = [c]$ for some $c > 0$. Let $f \in \S{L}_d$ be a max-linear function of the form 
\[
f(x) = \max \cbrace{w_p \T \phi_p(x) \,:\, p \in [c]},
\]
where $w_p \in \R^m$ for all $p \in [c]$. From Lemma \ref{lemma:w'phi(x) = w_p'x} follows that we can equivalently rewrite $f$ as 
\[
f(x) = \max \cbrace{\hat{w}_p \T x \,:\, p \in [c]},
\]
where $\hat{w}_p \in \R^d$ for all $p \in [c]$. Let 
\[
A = \argsS{\hat{w}_1, \ldots, \hat{w}_c}{\T}
\]
be the matrix whose rows are the weight vectors $\hat{w}_p \in \R^d$ of the components $f_p$. From Lemma \ref{lemma:A->W} follows that there is a weight matrix $W \in \R^{d \times e}$ and a subset $\S{Q} \subseteq \S{P}_{d \times e}$ such that $W \otimes_p x = \hat{w}_p \T x$. Hence, $f \in \S{E}_d$ and the proof is complete. 
\qed

\bigskip 

To complete the proof of  $\S{L}_d \subseteq \S{E}_d$, we need to show Lemma \ref{lemma:A->W}. The main steps of the proof are illustrated by Example \ref{ex:proof-lemma:A->W}.

\begin{lemma}\label{lemma:A->W}
Let $A \in \R^{c \times d}$ be a matrix with $c$ rows $a_i \in \R^d$ for all $i \in [c]$. Then there is a matrix $W \in \R^{d \times e}$, a subset $\S{Q} \subseteq \S{P}_{d,e}$ of $c$ warping paths, and a bijection $\pi:\S{Q} \rightarrow [c]$ such that
\[
\psi_p(W) = a_{\pi(p)}, 
\]
for all $p \in \S{Q}$. 
\end{lemma}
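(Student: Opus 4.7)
The plan is to give an explicit construction. Assume $d \geq 2$; the case $d=1$ is degenerate, since $\S{P}_{1,e}$ has a single element, forcing $c=1$. I would take $e = c + d - 1$, choose $c$ distinguishable warping paths $p_1, \ldots, p_c \in \S{P}_{d,e}$, one per row of $A$, and then design $W$ row-by-row so that $\psi_{p_k}(W) = a_k$.

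For the paths, I would let $p_k$ take $k-1$ east steps from $(1,1)$ to $(1,k)$, then $d-1$ southeast steps down to $(d, k+d-1)$, and finally $c-k$ east steps to $(d,e)$. These are manifestly valid warping paths, distinct for distinct $k$, so setting $\S{Q} = \{p_1,\ldots,p_c\}$ and $\pi(p_k)=k$ defines the required bijection. The key combinatorial feature is that in each middle row $i \in \{2,\ldots,d-1\}$, path $p_k$ visits exactly one cell, namely $(i, k+i-1)$, and distinct paths visit distinct cells of that row.

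For the matrix $W$, the middle rows are easy: set $w_{i, k+i-1} = a_{k,i}$ for each $i \in \{2,\ldots,d-1\}$ and $k \in [c]$, and zero elsewhere in those rows. By disjointness, $(\psi_{p_k}(W))_i = w_{i, k+i-1} = a_{k,i}$ is automatic. The boundary rows $1$ and $d$ are the main obstacle, because $p_k$ visits the multiple columns $1,\ldots,k$ in row $1$ and $k+d-1,\ldots,c+d-1$ in row $d$. This yields the triangular systems
\[
\sum_{j=1}^{k} w_{1,j} \;=\; a_{k,1}, \qquad \sum_{j=k+d-1}^{c+d-1} w_{d,j} \;=\; a_{k,d}, \qquad k \in [c],
\]
which are uniquely solved by telescoping: $w_{1,k} = a_{k,1} - a_{k-1,1}$ (with $a_{0,1}=0$) and $w_{d,k+d-1} = a_{k,d} - a_{k+1,d}$ (with $a_{c+1,d}=0$), all other entries of rows $1$ and $d$ being zero.

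A direct check then gives $(\psi_{p_k}(W))_i = a_{k,i}$ for every $i$ and $k$. The main care goes into the telescoping for the two boundary rows and into confirming that no middle-row entry reserved for path $p_k$ is visited by another path $p_j$, which reduces to the observation that $j+i-1 \neq k+i-1$ whenever $j \neq k$.
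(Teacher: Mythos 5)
Your proposal is correct and follows essentially the same construction as the paper: staircase-shaped warping paths that are pairwise disjoint on the interior rows, with telescoped differences on the first and last rows to solve the resulting triangular systems. The only differences are cosmetic --- you specify $\S{Q}$ explicitly and use elasticity $e=c+d-1$, whereas the paper pads with zero columns to reach $e=2(c-1)+d$ and defines $\S{Q}$ implicitly via don't-care entries.
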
 

\begin{proof}
Let $A' = \args{a_{ij}'} \in \R^{c \times d}$ be the matrix with elements
\[
a_{ij}' = \begin{cases}
a_{ij} - a_{i-1,j} & j = 1 \text{ and } i > 1\\
a_{ij} - a_{i+1,j} & j = d \text{ and } i < c\\
a_{ij} & \text{otherwise}
\end{cases}.
\]
Consider the matrix $B = A'{\T} \in \R^{d \times c}$ with elements $b_{ij} = a_{ji}'$. Let $n = 2c-1$. We define a matrix $C = (c_{ij}) \in \R^{d \times n}$ with elements
\[
c_{ij} = \begin{cases}
b_{ik} & j = 2(k-1)+1\\
0 & i \in \cbrace{1,d} \; \wedge \; {j\text{ mod }2} = 0\\
* & \text{otherwise}
\end{cases},
\]
where $*$ denotes the \emph{don't care} symbol. We set the elasticity $e = 2(c-1) + d$. Consider the weight matrix $W \in \R^{d \times e}$ with elements
\[
w_{ij} = \begin{cases}
c_{ik} & j = k + i-1\\
* & \text{otherwise}
\end{cases}.
\]
Let $\S{Q} \subseteq \S{P}_{d,e}$ be the subset consisting of all warping paths $p$ such that $(i,j) \in p$ implies $w_{ij} \neq *$. By construction, the warping paths $p$ in $\S{Q}$ are in one-to-one correspondence with the rows $a_i$ of matrix $A$ such that $w_p\T x = a_i\T x$, where $w_p = \psi_p(W)$ is the $p$-projection of $W$. This completes the proof.
\end{proof}

\medskip

\begin{example}\label{ex:proof-lemma:A->W}
Let $A = \args{a_{ij}}\in \R^{c \times d}$ be a matrix with $c = 3$, let $d = 4$. The matrix $A' = \args{a_{ij}'}$ is of the form
\[
A' = \begin{pmatrix}
a_{11} & a_{12} & a_{13} & a_{14}' \\
a_{21}' & a_{22} & a_{23} & a_{24}' \\
a_{31}' & a_{32} & a_{33} & a_{34} 
\end{pmatrix},
\] 
where 
\[
a_{21}' = a_{21} - a_{11}, \quad a_{31}' = a_{31}-a_{21}, \quad a_{14}' = a_{14} - a_{24}, \quad a_{24}' = a_{24}-a_{34}.
\]
Taking the transpose of $A'$ gives
\[
B = \begin{pmatrix}
a_{11} & a_{21}' & a_{31}'\\
a_{12} & a_{22} & a_{32} \\
a_{13} & a_{23} & a_{33} \\
a_{14}' & a_{24}' & a_{34}
\end{pmatrix}.
\]
Inserting zeros and don't care symbols gives
\[
C = \begin{pmatrix}
a_{11} & 0 & a_{21}' & 0 & a_{31}'\\
a_{12} & * & a_{22} & * & a_{32} \\
a_{13} & * & a_{23} & * & a_{33} \\
a_{14}' & 0 & a_{24}' & 0 & a_{34}
\end{pmatrix}.
\]
Finally, shifting the rows of $C$ yields the weight matrix
\[
W = 
\begin{pmatrix}
a_{11} & 0 & a_{21}' & 0 & a_{31}' & * & * & *\\
* & a_{12} & * & a_{22} & * & a_{32} & * & *\\
* & * & a_{13} & * & a_{23} & * & a_{33} & *\\
* & * & * & a_{14}' & 0 & a_{24}'& 0 & a_{34}
\end{pmatrix}.
\]
\end{example}

\subsection{Proof of Proposition \ref{prop:W'_d = L_d}}

Observe that 
\[
\phi_0 : \R^d \times \R^{d+2}, \quad x \mapsto 
\begin{pmatrix}
0\\[-0.4ex]
x\\[-0.4ex]
0
\end{pmatrix}
\] 
is a linear map. Since the composition of linear maps is linear, the relationship $\S{W}'_d \subseteq \S{L}_d$ follows from the first part of the proof of Theorem \ref{theorem:warped=max-linear}. It remains to show that $\S{L}_d \subseteq \S{W}'_d$. 

Let $\S{P} = [c]$ be the index set for some $c > 0$. In accordance with Lemma \ref{lemma:w'phi(x) = w_p'x}, let $f \in \S{L}_d$ be a max-linear function of the form 
\[
f(x) = \max \cbrace{\hat{w}_p \T x \,:\, p \in [c]},
\]
where $\hat{w}_p \in \R^d$ for all $p \in [c]$. Let 
\[
A = \argsS{\hat{w}_1, \ldots, \hat{w}_c}{\T}
\]
be the matrix whose rows are the weight vectors $\hat{w}_p \in \R^d$ of the components $f_p$. Let $e = cd$ and let 
\[
w = \argsS{\hat{w}_1\T, \ldots, \hat{w}_c\T}{\T} \in \R^e
\]
be the vector obtained by concatenating the columns $\hat{w}_p$ of $A$. Consider the matrix $M = (m_{ij}) \in \cbrace{0,1}^{e \times d+2}$. Suppose that $m_{ij} = 1$ if and only if the following condition is satisfied:
\begin{align*}
& \Big(1 \leq i \leq (c-1)d+1 \; \land \; j = 1 \Big)\\
\; \lor \;&  \Big(d \leq i \leq cd \; \land \; j = d+2 \Big)\\
\; \lor \;&  \Big(i = j-1 + kd \; \land \; 2 \leq j \leq d+1 \; \land 0 \leq k \leq (c-1)d\Big).
\end{align*}
Let $\S{Q} \subseteq \S{P}_{e, d+2}$ be the subset consisting of all warping paths $p$ such that $m_{ij} = 1$ for all points $(i,j) \in p$. Then the set $\S{Q}$ consists of exactly $c$ warping paths. Suppose that $M_p \in \cbrace{0,1}^{e \times d+2}$ is the warping matrix of $p \in \S{Q}$. By $M_p' \in \cbrace{0,1}^{e \times d}$ we denote the matrix obtained from $M_p$ by removing the first and last column. Then we have
\[
w \ast_p \phi_0(x) = w\T M_p \phi_0(x) = w\T M'_p x = \argsS{M'_p{\T}w}\T x.
\]
By construction of $\S{Q}$ we find that $M'_p{\T}w = \hat{w}_p $ giving $w \ast_p \phi_0(x) = \hat{w}_p\T x$. Hence, $f \in \S{W}'_d$ and the proof is complete
\qed

\commentout{
\begin{example}\label{ex:W_d!=L_d}
We assume that $d = 1$. Suppose that $a > 0$ is a scalar. Consider the max-linear function
\[
f:\R \rightarrow \R, \quad x \mapsto \max\cbrace{a x, -a x}
\]
with two components both with zero bias. Then we have $f(x) \geq 0$ for all $x \in \R$. 

Let $w \in \R^e$ be a weight sequence. Then the set $\S{P}_{e,1}$ consists of exactly one warping path. Suppose that $\alpha = (w_1 + \cdots + w_e) \in \R$. Then we have $g(x) = w \ast x = \alpha x$ for all $x \in \R$. The function $g: \R \rightarrow \R$ is either zero or a bijective function. In either case, we find that $f \neq q$. 
\end{example}
}

\subsection{Proof of Proposition \ref{prop:L C diff C lin}}

\begin{proposition}
$\S{L}_d \subsetneq \Delta(\S{L}_d) = \linear\args{\S{L}_d}$.
\end{proposition}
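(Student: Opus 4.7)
My plan is to prove the three claims in order: the inclusion $\S{L}_d\subseteq\Delta(\S{L}_d)$, the strictness of that inclusion, and the equality $\Delta(\S{L}_d)=\linear(\S{L}_d)$. The backbone of the argument is that $\S{L}_d$ is closed under sums and under multiplication by nonnegative scalars, but fails to be closed under negation, with the gap measured exactly by convexity.

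First I would record the relevant algebraic closure properties of $\S{L}_d$: (i) the zero function lies in $\S{L}_d$ (take a single component with zero weight); (ii) if $f,g\in\S{L}_d$ with components $\cbrace{f_p}_{p\in\S{P}}$ and $\cbrace{g_q}_{q\in\S{Q}}$, then
\[
f(x)+g(x)=\max_{p\in\S{P}}f_p(x)+\max_{q\in\S{Q}}g_q(x)=\max_{(p,q)\in\S{P}\times\S{Q}}\args{f_p(x)+g_q(x)},
\]
so $f+g\in\S{L}_d$; and (iii) for $\lambda\geq 0$, $\lambda f=\max_{p\in\S{P}}(\lambda f_p)\in\S{L}_d$ since each $\lambda f_p$ is again a generalized linear function. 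These three facts are the only closure properties I will use.

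The inclusion $\S{L}_d\subseteq\Delta(\S{L}_d)$ is immediate from $f=f-0$ and $0\in\S{L}_d$. For strictness, I would exhibit a concrete element of $\Delta(\S{L}_d)\setminus\S{L}_d$: take $d=1$ and consider $h(x)=-\abs{x}=0-\max\cbrace{x,-x}$. Since $\abs{\cdot}\in\S{L}_d$, we have $h\in\Delta(\S{L}_d)$. On the other hand, every $f\in\S{L}_d$ is a maximum of affine functions and hence convex, whereas $h$ is strictly concave and not affine, so $h\notin\S{L}_d$. For $d>1$, the same example pulled back via the first coordinate works verbatim.

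For the equality $\Delta(\S{L}_d)=\linear(\S{L}_d)$, the inclusion $\Delta(\S{L}_d)\subseteq\linear(\S{L}_d)$ is immediate by taking $\lambda_1=1,\lambda_2=-1$. For the reverse inclusion, let $g=\lambda_1 f_1+\lambda_2 f_2$ with $f_1,f_2\in\S{L}_d$ and $\lambda_1,\lambda_2\in\R$. Split each coefficient into positive and negative parts $\lambda_i=\lambda_i^+-\lambda_i^-$ with $\lambda_i^\pm\geq 0$, and group:
\[
g=\args{\lambda_1^+ f_1+\lambda_2^+ f_2}-\args{\lambda_1^- f_1+\lambda_2^- f_2}.
\]
By closure properties (ii) and (iii) above, both bracketed terms lie in $\S{L}_d$, so $g\in\Delta(\S{L}_d)$. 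The main conceptual observation — nothing beyond routine — is that there is no obstruction here; the only real content of the proposition is the strictness, which is the asymmetry between $+$ and $-$ enforced by convexity of max-linear functions.
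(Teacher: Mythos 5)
Your proof is correct and follows essentially the same route as the paper: the zero function gives $\S{L}_d \subseteq \Delta(\S{L}_d)$, convexity of max-linear functions gives strictness (the paper negates an abstract non-linear $f \in \S{L}_d$ where you use the concrete $-\abs{x}$), and closure of $\S{L}_d$ under sums and nonnegative scaling gives the equality (the paper's four-case sign analysis is exactly your positive/negative-part decomposition written out, though you are more careful in actually proving the closure under sums via the product index set). One cosmetic slip: $-\abs{x}$ is concave but not \emph{strictly} concave; your argument only needs ``concave and non-affine, hence non-convex,'' which holds.
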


\subsubsection*{Proof of $\S{L}_d \subseteq \Delta(\S{L}_d)$}
Let $f \in \S{L}_d$ be a max-linear function. From Lemma \ref{lemma:w'phi(x) = w_p'x} follows that $f$ can be written as 
\[
f(x) = \max \cbrace{w_p \T x \,:\, p \in S{P}},
\]
where $w_p \in \R^d$ for all $p \in \S{P}$. Observe that the constant function $g(x) = 0$ is contained in $\S{L}_d$. Hence, $f = f-g = \in \Delta(\S{L}_d)$. This proves the assertion. 
\qed
 
\subsubsection*{Proof of $\S{L}_d \neq \Delta(\S{L}_d)$}
Max-linear functions are convex, because linear functions are convex and convexity is closed under max-operation. Let $f \in \S{L}_d$ be non-linear and let $g \in \S{L}_d$ be the constant function $g(x) = 0$. Then the function $g-f$ is an element of $\Delta(\S{L}_d)$. But the function $-f = g-f$ is not contained in $\S{L}_d$, because $-f$ is non-convex. This completes the proof. 
\qed

\subsubsection*{Proof of $\Delta(\S{L}_d) = \linear\args{\S{L}_d}$}

The relationship $\Delta(\S{L}_d) \subseteq \linear\args{\S{L}_d}$ follows directly from the definition of both sets. We show $\linear\args{\S{L}_d} \subseteq \Delta(\S{L}_d)$. Let $f \in \linear\args{\S{L}_d}$. Then there are max-linear function $f_1, f_2 \in \S{L}_d$ and scalars $\lambda_1, \lambda_2 \in \R$ such that $f = \lambda_1 f_1 + \lambda_2 f_2$. We distinguish between the following cases:
\begin{enumerate}
\item $\lambda_1 \geq 0, \lambda_2 \geq 0$: Then $h_i = \lambda_i f_i$ is a max-linear function for $i \in \cbrace{1,2}$. In addition, the sum $f = h_1 + h_2$ is max-linear. From the first part of this proof follows that $f \in \Delta(\S{L}_d)$. 
\item $\lambda_1 \geq 0, \lambda_2 < 0$: Then $h_1 = \lambda_1 f_1$ and $h_2 = \abs{\lambda_2} f_2$ are max-linear functions. Hence, $f = h_1 - h_2$ is an element of $\Delta(\S{L}_d)$. 
\item $\lambda_1 < 0, \lambda_2 \geq 0$: Symmetric to the second case. 
\item $\lambda_1 < 0, \lambda_2 < 0$: Then $h_1 = \abs{\lambda_1} f_1$ and $h_2 = \abs{\lambda_2} f_2$ are max-linear functions and therefore $h = h_1 + h_2$ is a max-linear function with $f = -h$. We find that $f = 0-h$ is contained in $\Delta(\S{L}_d)$. 
\end{enumerate}
From all four cases follows $f \in \Delta(\S{L}_d)$ and therefore $\linear\args{\S{L}_d} \subseteq \Delta(\S{L}_d)$. This completes the proof.

\subsection{Subgradients}\label{sec:basic-definitions}

The \emph{subdifferential} of a convex function $f:\R^n \rightarrow \R$ at $x \in \R^n$ is the set 
\[
\partial f(x) = \cbrace{\xi \in \R^d \,:\, f(y) \geq f(x) + \xi\T(y-x)}.
\]
The elements of the subdifferential $\partial f(x)$ are the \emph{subgradients} of $f$ at $x$. 

\medskip

Let $f_\theta: \S{X} \rightarrow \R$ be a max-linear function with components $f_1, \ldots, f_c:\S{X} \rightarrow \R$ of the form $f_p(x) = w_p\T\phi_p(x)$, where $\phi_p : \R^d \rightarrow \R^m$ is a linear transformation such that $\phi_p(\S{X}) \subseteq \cbrace{1} \times \R^{m-1}$. The parameters of $f_\theta$ are summarized by the vector
\[
\theta = \begin{pmatrix}
w_1\\
\vdots\\
w_c
\end{pmatrix} \in \R^q,
\]
where $q = c \cdot m$. We regard all vectors $u \in \R^q$ as a stack of $c$ vectors $u[1], \ldots, u[c] \in \R^m$, briefly called \emph{segments} henceforth. Thus, the segments of the parameter vector $\theta$ are given by $\theta[p] = w_p$ for all $p \in [c]$. For every $p \in [c]$ we introduce the \emph{$p$-inflation function} 
\[
\psi_p : \R^m \rightarrow \R^q, \quad u \mapsto u_p 
\]
such that
\[
u_p[k]= \begin{cases}
u & k = p\\
0 & k \neq p
\end{cases}
\]
The $p$-th segment $u_p[p]$ of $u_p$ coincides with $u$ and all other segments of $u_p$ are zero vectors. We call $u_p$ the \emph{$p$-inflation} of $u$. 

\medskip

Next, we rewrite the regularized loss as a function of the parameters $\theta$. The regularized loss for training example $z = (x, y)$ is of the form
\[
h_\theta(x) = \ell\args{y, f_\theta(x)} + \lambda \rho(\theta).
\]
For every $p \in \S{P}$ let $x_p = \psi_p(\phi_p(x))$ denote the $p$-inflation of $\phi_p(x)$. Then we define the following functions of $\theta$ parametrized by $z$: 
\begin{align*}
f_p(\theta; z) 		&= \theta\T x_p = w_p\T \phi_p(x) = f_p(x)\\
f(\theta; z) 		&= \max\cbrace{f_p(\theta; z) \,:\, p \in [c]} = f_\theta(x)\\
\ell(\theta; z) 		&= \ell\args{f(\theta; z); z} = \ell\args{y, f_\theta(x)}\\
\rho(\theta; z)		&= \rho(\theta)\\
h(\theta; z)			&= \ell(\theta; z) + \lambda\rho(\theta; z).
\end{align*}
Note that not every function depends on $z$ and not every function depends on all components of $z$. In those cases, the dependence on $z$ is included for the sake of conformity. We prove the following proposition.

\begin{proposition}\label{prop:subgradient}
Let $z =(x, y) \in \S{X} \times \S{Y}$ be a training example, let $f_\theta$ be a max-linear function with parameter $\theta \in \R^q$, and let $f_p(x) = w_p\T\phi_p(x)$ be an active component of $f$ at $x$. Suppose that the loss $\ell(\theta; z)= \ell\args{\hat{y}; z}$ is convex as a function of $\hat{y} = f(\theta; z)$ and the regularization function $\rho(\theta; z)$ is convex as a function of $\theta$. Then the regularized loss 
\[
h(\theta; z)= \ell(\theta; z) + \lambda\rho(\theta; z)
\]
is convex and 
\[
\alpha \cdot \psi_p(\phi_p(x)) +  \lambda \xi \in \partial_\theta h_x(\theta)
\]
for every subgradient $\alpha \in \partial_\theta \ell_y(\theta) \subseteq \R$ and for every subgradient $\xi \in \partial_\theta \rho(\theta)$.
\end{proposition}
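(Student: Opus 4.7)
The plan is to rewrite every quantity as a function of the stacked parameter vector $\theta \in \R^q$, read off $x_p := \psi_p(\phi_p(x))$ as the natural representation of the active component, and then assemble the claimed subgradient by combining (i) a direct subgradient of the max-linear function $f$, (ii) the chain rule for the scalar convex loss, and (iii) the sum rule for the convex regularizer.

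First I would verify the bookkeeping identity $f_p(\theta;z) = \theta\T x_p$. By construction of the $p$-inflation only the $p$-th segment of $x_p$ is nonzero and equals $\phi_p(x)$, so $\theta\T x_p = w_p\T \phi_p(x) = f_p(x)$; hence $f(\theta;z) = \max_{p \in [c]} \theta\T x_p$ is a pointwise maximum of linear functions of $\theta$ and therefore convex in $\theta$. Next I would show $x_p \in \partial_\theta f(\theta;z)$ at any $\theta$ where $f_p$ is active: activity gives $f(\theta;z) = \theta\T x_p$, while for any $\theta' \in \R^q$ the defining inequality of the max reads $f(\theta';z) \geq (\theta')\T x_p$, so
\[
f(\theta';z) - f(\theta;z) \;\geq\; (\theta'-\theta)\T x_p,
\]
which is exactly the subgradient condition. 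For the loss, any $\alpha \in \partial_{\hat y} \ell(\hat y;z)$ at $\hat y = f(\theta;z)$ satisfies $\ell(\hat y';z) - \ell(\hat y;z) \geq \alpha(\hat y'-\hat y)$ for all $\hat y' \in \R$; specializing to $\hat y' = f(\theta';z)$ and chaining with the subgradient inequality for $f$ produces $\alpha\, x_p \in \partial_\theta \ell(\theta;z)$. Together with convexity of $\rho$ and $\lambda \geq 0$ this also yields convexity of $h$, and the sum rule for subdifferentials of convex functions finally gives
\[
\alpha \cdot \psi_p(\phi_p(x)) + \lambda\,\xi \;\in\; \partial_\theta h(\theta;z)
\]
for every $\xi \in \partial_\theta \rho(\theta)$, which is the claim.

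The main obstacle is the chain-rule step: composing a scalar convex outer function $\ell(\,\cdot\,;z)$ with a convex inner function $f(\,\cdot\,;z)$ preserves convexity (and makes the chain rule $\alpha \cdot x_p \in \partial_\theta \ell(\theta;z)$ valid) only when $\ell$ is nondecreasing on the range of $f$, equivalently when the subgradient $\alpha$ has the appropriate sign. For the losses of Table \ref{tab:loss-functions}, once the label $y$ is absorbed into the sign of $\hat y$ this condition is met, so the argument closes directly; more generally one has to interpret $\partial_\theta$ as a generalized (Clarke) gradient to avoid the monotonicity hypothesis, and that is the direction in which I would formalize the last step to obtain the conclusion without extra assumptions on $\ell$.
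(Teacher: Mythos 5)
Your skeleton coincides with the paper's own proof: the bookkeeping identity $f_p(\theta;z)=\theta\T x_p$ with $x_p=\psi_p(\phi_p(x))$, the subgradient inequality $f(\theta';z)\geq f(\theta;z)+x_p\T(\theta'-\theta)$ obtained from activity of $f_p$ (this is exactly Lemma \ref{lemma:subgradient-of-f}), and the sum rule for the convex regularizer are the same three steps the paper takes. The only divergence is the composition step, and you have correctly isolated the one genuinely delicate point: chaining $\ell(\hat y';z)-\ell(\hat y;z)\geq\alpha(\hat y'-\hat y)$ with $f(\theta';z)-f(\theta;z)\geq x_p\T(\theta'-\theta)$ yields $\alpha\, x_p\in\partial_\theta \ell(\theta;z)$ only when $\alpha\geq 0$, i.e.\ when $\ell$ is nondecreasing at $\hat y=f(\theta;z)$. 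The paper closes this step exactly in the direction you indicate but leave unexecuted: it abandons the convex subdifferential in favour of Clarke calculus, noting that $\ell$ and $f$ are subdifferentially regular as convex functions (\cite{Bagirov2014}, Thm.~3.13), invoking the chain rule for regular functions (\cite{Bagirov2014}, Thm.~3.20) to get $\partial_\theta(\ell\circ f)(\theta)=\conv\cbrace{\partial_\theta f(\theta;z)\T\,\partial \ell(f(\theta;z);z)}$ without any sign restriction on $\alpha$, and then adding the regularizer by the sum rule for regular functions (Thm.~3.16). So your proposal is the same proof with its citation-backed step sketched rather than carried out; what the elementary route buys (when it applies) is independence from that machinery, and what the paper's route buys is validity for arbitrary convex $\ell$.

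One further remark: your monotonicity worry in fact bears on the statement itself, not just on your proof. The asserted convexity of $h(\theta;z)$ genuinely requires $\ell$ to be nondecreasing on the range of $f$; for instance the Adaline loss $\ell(\hat y)=\tfrac12(1-\hat y)^2$ composed with the max-linear $f(\theta)=\max\cbrace{\theta_1,\theta_2}$ is not convex in $\theta$. The paper's proof never establishes convexity of $h$ either --- it only establishes subdifferential regularity, which is all the chain rule needs. Your instinct to read $\partial_\theta$ throughout as a Clarke (generalized) subdifferential is therefore the correct way to make the conclusion hold without extra hypotheses on $\ell$.
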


\medskip

Before we can prove Prop.~\ref{prop:subgradient}, we need some auxiliary results.

\begin{lemma}\label{lemma:convex}
A max-linear function is convex. 
\end{lemma}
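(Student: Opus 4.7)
The plan is to decompose the claim into two well-known facts: (i) each component $f_p(x) = w_p\T \phi_p(x)$ of a max-linear function is affine, hence convex; and (ii) the pointwise maximum of a finite collection of convex functions is convex. Since a max-linear function $f$ is by definition $f(x) = \max\{f_p(x) : p \in \S{P}\}$ with $\S{P}$ finite and each $f_p$ a generalized linear function, combining (i) and (ii) yields the claim immediately.

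For step (i), I would note that $\phi_p:\R^d \to \R^m$ is a linear map, so $f_p = w_p\T \phi_p$ is the composition of a linear map with the linear functional $u \mapsto w_p\T u$, and is therefore affine. Any affine function satisfies $f_p(\lambda x + (1-\lambda) y) = \lambda f_p(x) + (1-\lambda) f_p(y)$ with equality, which in particular gives convexity.

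For step (ii), given $x, y \in \R^d$ and $\lambda \in [0,1]$, I would fix any $p \in \S{P}$ and use $f_p(\lambda x + (1-\lambda) y) \leq \lambda f(x) + (1-\lambda) f(y)$, which follows from applying the convexity of $f_p$ followed by the bound $f_p \leq f$ on each argument. Taking the maximum over $p$ on the left-hand side then yields $f(\lambda x + (1-\lambda) y) \leq \lambda f(x) + (1-\lambda) f(y)$, which is exactly convexity of $f$.

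I do not expect any real obstacle here; the argument is entirely standard convex-analysis bookkeeping, and the only thing to be careful about is ensuring the index set $\S{P}$ is finite (which is part of the definition of $\S{L}_d$ in the paper) so that the pointwise maximum is well-defined and attained. No new constructions or hidden subtleties arise, so the proof should fit in a few lines.
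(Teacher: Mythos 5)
Your proposal is correct and follows essentially the same route as the paper's own proof: the components are (affine-)linear and hence convex, and convexity is preserved under taking a finite pointwise maximum. The paper states this in two sentences; you merely fill in the standard details of both steps.
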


\begin{proof}
Linear function are convex. Since convexity is closed under max-operations, a max-linear function is convex. 
\end{proof}

\begin{lemma}\label{lemma:subgradient-of-f}
Let $z = (x, y) \in \S{X} \times \S{Y}$ and let $f_\theta: \S{X} \rightarrow \R$ be a max-linear function with parameter $\theta \in \R^q$. Suppose that $f_p  = w_p{\T} \phi_p(x)$ is an active component of $f$ at $x$. Then
\[
x_p \in \partial_\theta f(\theta; z),
\]
where $x_p = \psi_p(\phi_p(x))$ is the $p$-inflation of $\phi_p(x)$.
\end{lemma}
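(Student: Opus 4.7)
The plan is to verify the subgradient inclusion directly from the definition, exploiting the fact that a max-linear function, viewed as a function of the parameter vector $\theta$, is the pointwise maximum of linear functions indexed by the components.

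First I would rewrite the component functions in the ``$\theta$-centric'' form using the $p$-inflation. By construction, if $x_p = \psi_p(\phi_p(x))$, then $\theta^\T x_p = w_p^\T \phi_p(x) = f_p(x)$, since the only nonzero segment of $x_p$ is the $p$-th one, which equals $\phi_p(x)$, and the corresponding segment of $\theta$ is $w_p$. Hence $f_p(\theta; z) = \theta^\T x_p$ and $f(\theta; z) = \max\{\theta^\T x_k : k \in [c]\}$ is a pointwise maximum of linear functions in $\theta$, which in particular confirms convexity (matching Lemma~\ref{lemma:convex} at the parameter level).

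Next I would verify the subgradient inequality. Fix an arbitrary $\theta' \in \R^q$. Since $f_p$ is active at $\theta$, we have $f(\theta; z) = f_p(\theta; z) = \theta^\T x_p$. Using the pointwise-max definition of $f$,
\begin{align*}
f(\theta'; z) &= \max_{k \in [c]} (\theta')^\T x_k \;\geq\; (\theta')^\T x_p \\
&= \theta^\T x_p + x_p^\T(\theta' - \theta) \;=\; f(\theta; z) + x_p^\T(\theta' - \theta).
\end{align*}
This is exactly the defining inequality for $x_p \in \partial_\theta f(\theta; z)$, so the inclusion holds.

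There is essentially no obstacle here: the whole argument reduces to the standard fact that the gradient of any active affine piece is a subgradient of a pointwise-max convex function. The only mildly fiddly step is bookkeeping the $p$-inflation so that the linear form $\theta^\T x_p$ correctly picks out $w_p^\T \phi_p(x)$ and ignores all other segments of $\theta$; once this identification is made the subgradient inequality is immediate from the max definition and the activity of $f_p$.
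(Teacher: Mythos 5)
Your proof is correct and follows essentially the same route as the paper: both arguments verify the subgradient inequality directly from the definition by combining the activity of $f_p$ at $\theta$, the linearity of $\theta \mapsto \theta\T x_p$, and the fact that the max dominates each component at any $\theta'$. The only cosmetic difference is that you make the ``pointwise max of linear functions in $\theta$'' reformulation explicit up front, whereas the paper states the same three facts as a short system of (in)equalities.
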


\begin{proof}
The function $f(\theta; z)$ is max-linear and therefore convex by Lemma \ref{lemma:convex}. Hence, the subdifferential $\partial_\theta f(\theta; z)$ exists and is non-empty by \cite{Bagirov2014}, Theorem 2.27.

The component $f_p(\theta; z) = \theta\T x_p$ is linear with gradient $\nabla_\theta f_p(\theta; z) = x_p$. Suppose that $\theta' \in \R^q$ is an arbitrary parameter vector. Then the following system of equations holds:
\begin{align*}
f(\theta'; z) &\geq f_p(\theta'; z) \\
f(\theta; z)  &= f_p(\theta; z)\\
f_p(\theta'; z) &= f_p(\theta; z) + x_p\T (\theta' - \theta) 
\end{align*}
The first inequality holds, because $f_p(\theta';z)$ is a component of the max-linear function $f(\theta';z)$. The second equation holds, because $f_p(\theta;z)$ is an active component of $f(\theta;z)$ at $\theta$. Finally, the last equation holds by the properties of a linear function. Combining the equations yields
\[
f_x(\theta') \geq f_p(\theta'; x) = f_p(\theta) + x_p\T (\theta' - \theta)
\]
for all $\theta' \in \Theta$. This shows that $x_p \in \partial_\theta f_x(\theta)$.
\end{proof}

\subsubsection*{Proof of Prop.~\ref{prop:subgradient}}

\begin{proof}
Let $x_p = \psi_p(\phi_p(x))$ be the $p$-inflation of $\phi_p(x)$. We first ignore the regularization term. Suppose that 
$h_0(\theta) = \ell(f(\theta; z); z)$. The loss $\ell(f(\theta; z); z)$ is convex as a function of $f(\theta; z)$ by assumption and the max-linear function $f(\theta; z)$ is convex by Lemma \ref{lemma:convex}. As convex functions, $\ell$ and $f$ are subdifferentially regular by \cite{Bagirov2014}, Theorem 3.13. Hence, we can invoke \cite{Bagirov2014}, Theorem 3.20 and obtain that $h_0$ is subdifferentially regular such that
\[
\partial_\theta h_0(\theta) = \conv \cbrace{\partial_\theta f(\theta; z) \T \partial_\theta \ell(f(\theta; z); z)}.
\]
From Lemma \ref{lemma:subgradient-of-f} follows that $x_p \in \partial_\theta f(\theta; z)$. This shows that $\alpha \cdot x_p \in \partial_\theta h_0(\theta)$ for every $\alpha \in \partial_\theta \ell(f(\theta; z); z)$. 

\medskip

Next, we include the regularization function $\rho(\theta; z)$. Since $\rho$ is convex by assumption, \cite{Bagirov2014}, Theorem 3.13 yields that $\rho$ is subdifferentially regular. Then from \cite{Bagirov2014}, Theorem 3.16 follows that the function $h(\theta; z) = h_0(\theta) + \rho(\theta; z)$ is subdifferentially regular and
\[
\partial_\theta h(\theta; z) = \partial_\theta h_0(\theta)  + \partial_\theta \rho(\theta; z).
\]
Together with the first part of this proof, we obtain the assertion that $\alpha \cdot x_p + \xi \in \partial_\theta h(\theta; z)$ for every $\alpha \in \partial_\theta \ell(f(\theta; z); z)$ and for every $\xi \in \partial_\theta \rho(\theta; z)$. 
\end{proof}
\end{appendix}

\end{document}